\documentclass{article} %
\usepackage[final]{colm2026_conference}

\usepackage{microtype}
\usepackage{hyperref}
\usepackage{url}
\usepackage{booktabs}
\usepackage{multirow}
\usepackage{tabularx}
\usepackage{graphicx}
\usepackage{subcaption}
\usepackage{amsmath,amsfonts}
\usepackage{inconsolata}
\usepackage[capitalize,noabbrev]{cleveref}

\usepackage{enumitem}
\setlist[itemize]{
    leftmargin=*,       %
    labelsep=0.5em,     %
    nosep,              %
    itemindent=0pt,     %
    leftmargin=2.0em    %
}

\usepackage{amsmath, amssymb, amsthm}
\newtheorem{theorem}{Theorem}[section]
\newtheorem{proposition}[theorem]{Proposition}
\newtheorem{corollary}[theorem]{Corollary}

\newcommand{\nothink}{\texttt{/no\_think}}
\newcommand{\think}{\texttt{/think}}
\newcommand{\ple}{\textsc{PLE}}

\definecolor{darkgreen}{rgb}{0.0, 0.5, 0.0}
\definecolor{lowblue}{rgb}{0.1,0.2,0.5}
\definecolor{POneGreen}{RGB}{0,100,0} %
\definecolor{ArchiveOrange}{RGB}{204,102,0} %

\usepackage{lineno}

\definecolor{darkblue}{rgb}{0, 0, 0.5}
\hypersetup{colorlinks=true, citecolor=darkblue, linkcolor=darkblue, urlcolor=darkblue}

\title{Path-Lock Expert: Separating Reasoning Mode in Hybrid Thinking via Architecture-Level Separation}

\author{Shouren Wang$^{1,*}$, Wang Yang$^{1,*}$, Chuang Ma$^{2,3}$, Debargha Ganguly$^{1}$, Vikash Singh$^{1}$, \\ \textbf{
Chaoda Song$^{1}$, Xinpeng Li$^{1}$, Xianxuan Long$^{4}$, Vipin Chaudhary$^{1,\dagger}$, Xiaotian Han$^{1,\dagger}$ }\\
$^{1}$Case Western Reserve University \quad $^{2}$Kyoto University \quad $^{3}$NII LLMC \\ $^{4}$Michigan State University \\
\texttt{\{sxw992,wxy320,dxg512,vxs465,cxs965,xxl1337,vipin,xhan\}@case.edu} \\
\texttt{ma.chuang.52h@st.kyoto-u.ac.jp} \quad \texttt{longxia2@msu.edu} \\
$^{*}$Equal contribution \quad $^{\dagger}$Corresponding authors
}

\begin{document}

\ifcolmsubmission
\linenumbers
\fi

\maketitle

\begin{abstract}
Hybrid-thinking language models expose explicit \think{} and \nothink{} modes, but current designs do not separate them cleanly. Even in \nothink{} mode, models often emit long and self-reflective responses, causing \emph{reasoning leakage}. 
Existing work reduces this issue through better data curation and multi-stage training, yet leakage remains because both modes are still encoded in the same feed-forward parameters.
We propose \textbf{Path-Lock Expert (\ple{})}, an architecture-level solution that replaces the single MLP in each decoder layer with two semantically locked experts, one for \think{} and one for \nothink{}, while keeping attention, embeddings, normalization, and the language-model head shared.
A deterministic control-token router selects exactly one expert path for the entire sequence, so inference preserves the dense model's per-token computation pattern and each expert receives mode-pure updates during supervised fine-tuning.
Across math and science reasoning benchmarks, \ple{} maintains strong \think{} performance while producing a substantially stronger mode separation---a \nothink{} mode with higher accuracy and far less reasoning leakage.
On Qwen3-4B, for example, compared to the SFT-only baseline on AIME24, \ple{} generates $17\times$ fewer reflective tokens ($6.01$ vs.\ $0.35$ per answer) and $2\times$ shorter outputs ($8665$ vs.\ $4101$ tokens), and improves \nothink{} accuracy from $35.33\%$ to $44.67\%$, while maintaining \think{}-mode performance ($61.33\%$ vs.\ $60.00\%$).
These results suggest that controllable hybrid thinking is fundamentally an architectural problem, and separating mode-specific feed-forward pathways is a simple and effective solution. 
The code is available at: \url{https://github.com/SR-A-W/path-lock-expert}
\end{abstract}

\begin{figure}[ht]
    \centering
    \includegraphics[width=1.0\linewidth]{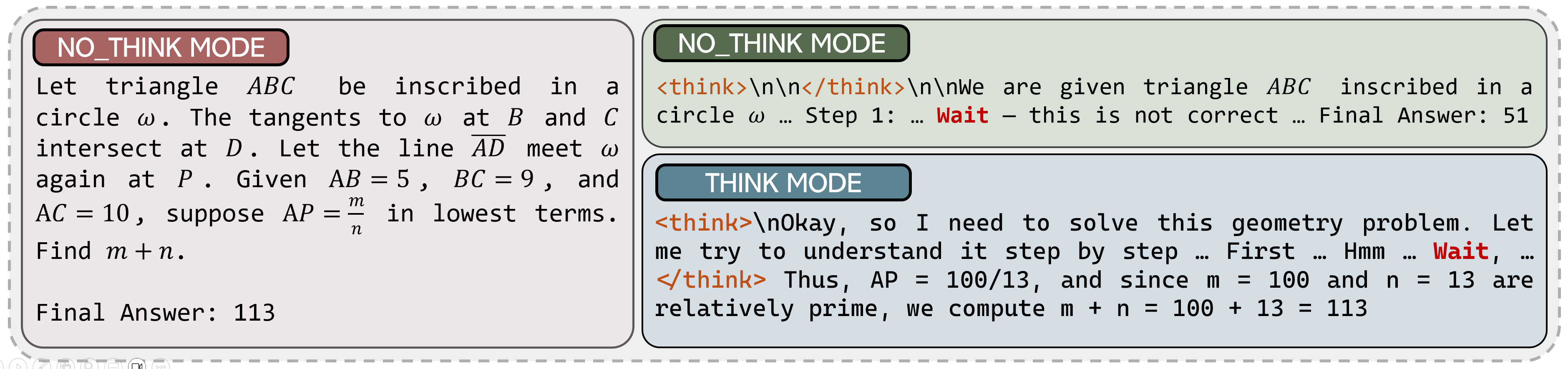}
    \caption{Motivating example of reasoning leakage. On an AIME24 problem, Qwen3-8B in \nothink{} mode still emits self-reflective tokens (e.g., \textcolor{red}{\texttt{Wait}}) outside the empty \texttt{<think>} block and arrives at an incorrect answer, while \think{} mode solves it correctly. This shows that current hybrid-thinking models cannot fully separate the different reasoning modes.}
    \label{fig:placeholder}
\end{figure}

\section{Introduction}\label{sec:introduction}
\begin{table}[t]
\centering
\caption{Reasoning leakage in hybrid-thinking models. We compare Qwen2.5-7B-Instruct (Instruct, a pure instruction-tuned model) with Qwen3-4B (Hybrid, a hybrid-thinking model). We report accuracy (Acc.), average output length (Len.), and average reflective tokens per response (\#Refl./Ans.). $\Delta$ denotes the difference between each model's \nothink{} performance and the Instruct baseline. The hybrid model's \nothink{} mode stays markedly longer and more reflective than a genuine direct-answer mode.}
\label{tab:mov_all}
\vspace{-8pt}
\resizebox{\textwidth}{!}{
\setlength{\tabcolsep}{3pt}
\begin{tabular}{l|ccc|ccc|ccc}
\toprule
\multirow{2}{*}{Model}
& \multicolumn{3}{c|}{Acc. (\%)} & \multicolumn{3}{c|}{Len.} & \multicolumn{3}{c}{\#Refl./Ans.} \\
\cmidrule(lr){2-4} \cmidrule(lr){5-7} \cmidrule(lr){8-10}
& Think & No-think & $\Delta$ & Think & No-think & $\Delta$ & Think & No-think & $\Delta$ \\
\midrule
\multicolumn{10}{c}{MATH500} \\
\midrule
Instruct & -- & \textcolor{lowblue}{59.94} & & -- & \textcolor{lowblue}{703} & & -- & \textcolor{lowblue}{0} &  \\
Hybrid & 92.82 & \textcolor{lowblue}{82.60} & \textcolor{lowblue}{\textbf{+22.66}} & 4384 & \textcolor{lowblue}{1583} & \textcolor{lowblue}{\textbf{+880}} & 16.74 & \textcolor{lowblue}{0.04} & \textcolor{lowblue}{\textbf{+0.04}} \\
\midrule
\multicolumn{10}{c}{AIME24} \\
\midrule
Instruct & -- & \textcolor{lowblue}{6.67} & & -- & \textcolor{lowblue}{1729} & & -- & \textcolor{lowblue}{0} &  \\
Hybrid & 68.00 & \textcolor{lowblue}{23.33} & \textcolor{lowblue}{\textbf{+16.66}} & 16466 & \textcolor{lowblue}{12506} & \textcolor{lowblue}{\textbf{+10777}} & 98.35 & \textcolor{lowblue}{0.17} & \textcolor{lowblue}{\textbf{+0.17}} \\
\midrule
\multicolumn{10}{c}{GPQA} \\
\midrule
Instruct & -- & \textcolor{lowblue}{30.15} & & -- & \textcolor{lowblue}{775} & & -- & \textcolor{lowblue}{0} &  \\
Hybrid & 51.01 & \textcolor{lowblue}{46.97} & \textcolor{lowblue}{\textbf{+16.82}} & 11371 & \textcolor{lowblue}{2679} & \textcolor{lowblue}{\textbf{+1904}} & 188.59 & \textcolor{lowblue}{0.05} & \textcolor{lowblue}{\textbf{+0.05}} \\
\bottomrule
\end{tabular}
}
\end{table}
Hybrid-thinking language models are only useful if their control tokens actually control behavior. In principle, \think{} should invoke deliberate reasoning when a problem requires it, while \nothink{} should return a direct answer when extended reasoning is unnecessary~\citep{yang2025qwen3,gemini25,agarwal2025gpt}. This distinction matters because long reasoning traces come with an associated cost: they increase latency, consume token budget, and occupy context-window space~\citep{sui2025stop,chen2024not,zhang2025lightthinker}. A practical \nothink{} mode therefore must not be a weakened version of \think{}; it must operate as a genuine direct-answer mode.

In current hybrid-thinking systems, this separation is weak. Even under explicit \nothink{} instructions, models often continue to emit signs of deliberation: responses become unnecessarily long, reflective markers such as \texttt{wait} or \texttt{hmm} appear, and outputs drift toward partial chain-of-thought generation. Prior literature defined this failure mode as \textbf{reasoning leakage}. Under this failure mode, \nothink{} becomes an unstable halfway mode between direct answering and explicit reasoning, thereby the interface no longer delivers the efficiency, predictability, and behavioral control that motivate hybrid thinking in the first place.

Most existing attempts to fix leakage operate through training alone. Prior work shows that changing data scale, \think{} vs. \nothink{} ratios, and multi-stage training schedules can reduce leakage substantially~\citep{wang2025demystifying}. However, lower leakage alone does not guarantee a strong \nothink{} mode: answers may remain verbose, partially reflective, or weaker in accuracy. This suggests that the limitation is not only in data or optimization. A single dense decoder is being asked to realize two competing output behaviors with the same feed-forward parameters; one mode should externalize reasoning, while the other should suppress it. We therefore hypothesize that weak mode separation is partly an architectural interference problem.

We address this with \textbf{Path-Lock Expert (\ple{})}, a minimal architecture that makes the mode choice part of the architecture itself. \ple{} replaces each decoder MLP with two semantically locked experts, one dedicated to \think{} and one dedicated to \nothink{}, and uses deterministic control-token routing to activate exactly one expert path for the entire sequence. Attention, embeddings, normalization, and the language-model head remain shared. This design separates the parameters most directly tied to generation behavior while preserving a shared representational backbone, and it does so without a learned router, auxiliary balancing losses, or a complex RL-based training pipeline.

Across mathematical and scientific reasoning benchmarks, \ple{} yields a substantially stronger \nothink{} mode than dense baselines and prior training-only hybrid approaches: no-think outputs are shorter, exhibit far less reasoning leakage, and achieve better no-think accuracy in our primary evaluations. At the same time, \ple{} maintains strong \think{}-mode performance. Our claim is intentionally specific: we do not argue that \ple{} solves hybrid thinking in general or establishes a new overall state of the art; rather, we show that architecture-level mode separation can materially improve the usefulness of the \nothink{} mode while preserving the value of \think{}. Our contributions are as follows:
\begin{itemize}
    \item We motivate an architectural view of reasoning leakage in hybrid-thinking models, arguing that training heuristics alone may be insufficient because shared feed-forward parameters interfere with clean mode separation.
    \item We introduce \ple{}, a deterministic dual-expert decoder architecture that separates \think{} and \nothink{} behavior at the MLP level while preserving a shared Transformer backbone and dense-model-like per-token inference computation.
    \item We show that, across our evaluated settings, \ple{} strengthens the \nothink{} mode through lower leakage, shorter outputs, and improved no-think accuracy in the primary evaluations, while maintaining strong \think{} performance.
\end{itemize}

\section{Motivation}\label{sec:motivation}
We define \textbf{hybrid thinking} as an interface in which a single model exposes two response behaviors---an extended chain-of-thought reasoning mode (\think{}) and a concise direct-answer mode (\nothink{}), termed the \textit{thinking} and \textit{non-thinking} modes in the Qwen3 Technical Report~\citep{yang2025qwen3}---selected by a user-supplied control token appended to the query, following that report and \citet{wang2025demystifying}. This explicit, user-directed setting is distinct from \emph{implicit-routing} variants, where the model decides internally. Our problem statement follows the explicit setting: \emph{can a hybrid-thinking model truly switch between \think{} and \nothink{} modes---in particular, given that the user has selected \nothink{}, can the model reliably yield a short and direct answer?}

We use a simple empirical comparison: we compare a pure instruction-tuned model, an off-the-shelf hybrid-thinking model, and the strongest prior training-level mitigation, and ask whether \nothink{} behaves like a genuine direct-answer mode. First, \nothink{} should suppress explicit reasoning artifacts such as long deliberative outputs and reflective phrases. Second, it should preserve the behavioral character of direct answering rather than merely producing a weaker version of \think{}.

\paragraph{Off-the-shelf hybrid thinking leaks by default.}
\Cref{tab:mov_all} shows that released hybrid-thinking models do learn to respond differently under \think{} and \nothink{} prompts, but the separation is incomplete.
Relative to the pure Instruct baseline, the hybrid model's \nothink{} outputs remain longer and contain nonzero reflective-token counts across benchmarks.
This means the issue is not that hybrid models fail to react to the control token at all; rather, they react only partially.
\nothink{} behaves like a softened form of thinking, not like a truly distinct direct-answer mode.

\paragraph{Training-only fixes help, but they stop short of true separation.}
\citet{wang2025demystifying} systematically explored the most natural mitigation strategies: scaling training data, changing the think/no-think ratio, and applying multi-stage training.
Their best recipe significantly reduces leakage, confirming that data and optimization matter.
However, it still does not match the behavioral target defined above.
Reflective tokens remain nonzero, and the resulting \nothink{} behavior does not consistently recover the direct-answer profile of the pure Instruct model across benchmarks.
This is the key empirical observation: even after substantial recipe engineering, the model remains only partially controllable.

\section{Method: Path-Lock Expert}\label{sec:method}

\begin{figure}[t]
    \centering
    \includegraphics[width=1.0\linewidth]{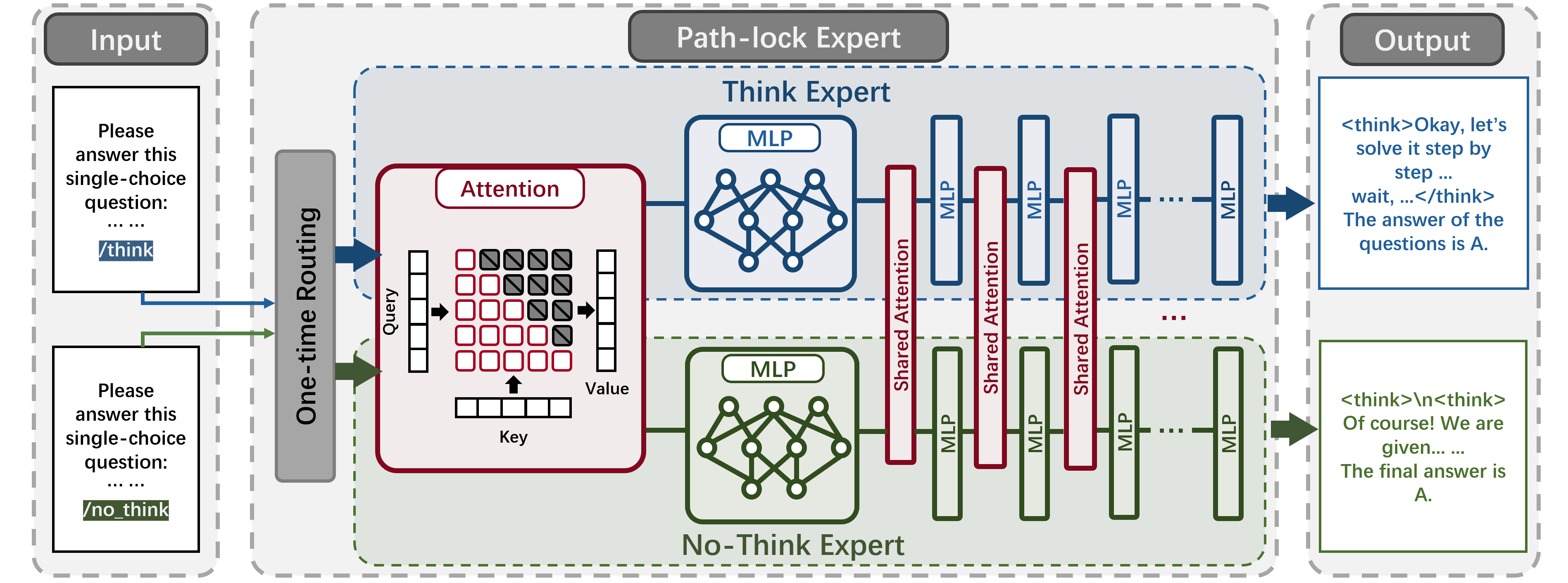}
    \caption{Path-Lock Expert replaces the single MLP in each decoder layer with two mode-specific experts while keeping the attention backbone shared. A single routing decision is made from the control tokens and reused across all layers and all decoding steps.}
    \label{fig:architecture}
\end{figure}

Path-Lock Expert (\ple{}) aims to achieve three primary objectives in hybrid-thinking language models: 1) better reasoning mode separation than conventional hybrid-thinking architectures; 2) performance preservation, ensuring the model's core reasoning capabilities are not degraded; and 3) parameter efficiency, avoiding the prohibitive cost of training or deploying two independent models. \ple{} achieves these with a minimal modification to a standard decoder-only Transformer, described below.

\textbf{Path-locked dual-expert decoder.}
In every layer, the single MLP is replaced by two structurally identical copies---one for \think{} and one for \nothink{}. All other components (self-attention, positional encoding, normalization, embeddings, LM head) remain shared. Because exactly one expert is active per response, per-token computation is unchanged. By replicating only the MLP rather than duplicating the full model, \ple{} saves approximately 33\% of total parameters relative to deploying two independent copies of the backbone.

\textbf{Deterministic control-token routing.}
The active expert is selected by the last \texttt{/think} or \texttt{/no\_think} token appearing in the input; the choice is resolved once and locked for all layers and all decoding steps. No learned router or balancing loss is involved. This one-time, all-layer lock is what motivates the name Path-Lock Expert: swapping the control token flips a single routing index and activates the other expert, leaving attention, embeddings, norms, and the LM head unchanged. We formalize the detection procedure, its string-level fallback, and edge-case handling, and also reported detailed validation tests in Appendix~\ref{sec:appendix_routing_pseudocode}.

\textbf{Routing-conditioned fine-tuning.}
PLE is fine-tuned with standard causal LM loss on chat examples tagged with either mode. The inactive expert receives zero gradient; the shared backbone is updated by both modes. This removes direct parameter coupling between the two MLP pathways while preserving a common representational substrate. We provide a rigorous definition in Appendix~\ref{sec:formal_architecture_specification}.

\paragraph{Relation to Mixture-of-Experts.}
While previous work such as \textit{Metis-HOME} \citep{lan2025metishome} has explored using a two-expert MoE for mode separation, their approach still relies on the \textbf{conventional MoE framework} with learned, token-level routers. In contrast, \ple{} shifts from token-level competition to \textbf{deterministic, sequence-level routing}. By tying expert identity directly to the \think{} or \nothink{} interface via control tokens, \ple{} eliminates the need for router parameters, balancing losses, or complex dispatch decisions. This ensures a ``locked''  path that is structurally stable and specifically designed for behavioral control.

\section{Experiments}\label{sec:experiments}

In this section, we evaluate the effectiveness of \ple{} in achieving architectural mode separation. Our experiments are designed to verify whether physical parameter isolation can improve mode separation by addressing three key questions: 
1) \textbf{Leakage Reduction:} Does \ple{} significantly reduce reasoning leakage (reflective tokens) in \nothink{} mode compared to dense baselines? 
2) \textbf{Mode Performance:} Does \ple{} improve the performance of the \nothink{} mode by reducing gradient interference? 
3) \textbf{Reasoning Preservation:} Does the architectural separation maintain the original \think{} performance without degradation?

\begin{figure}[t]
    \centering
    \includegraphics[width=1.0\linewidth]{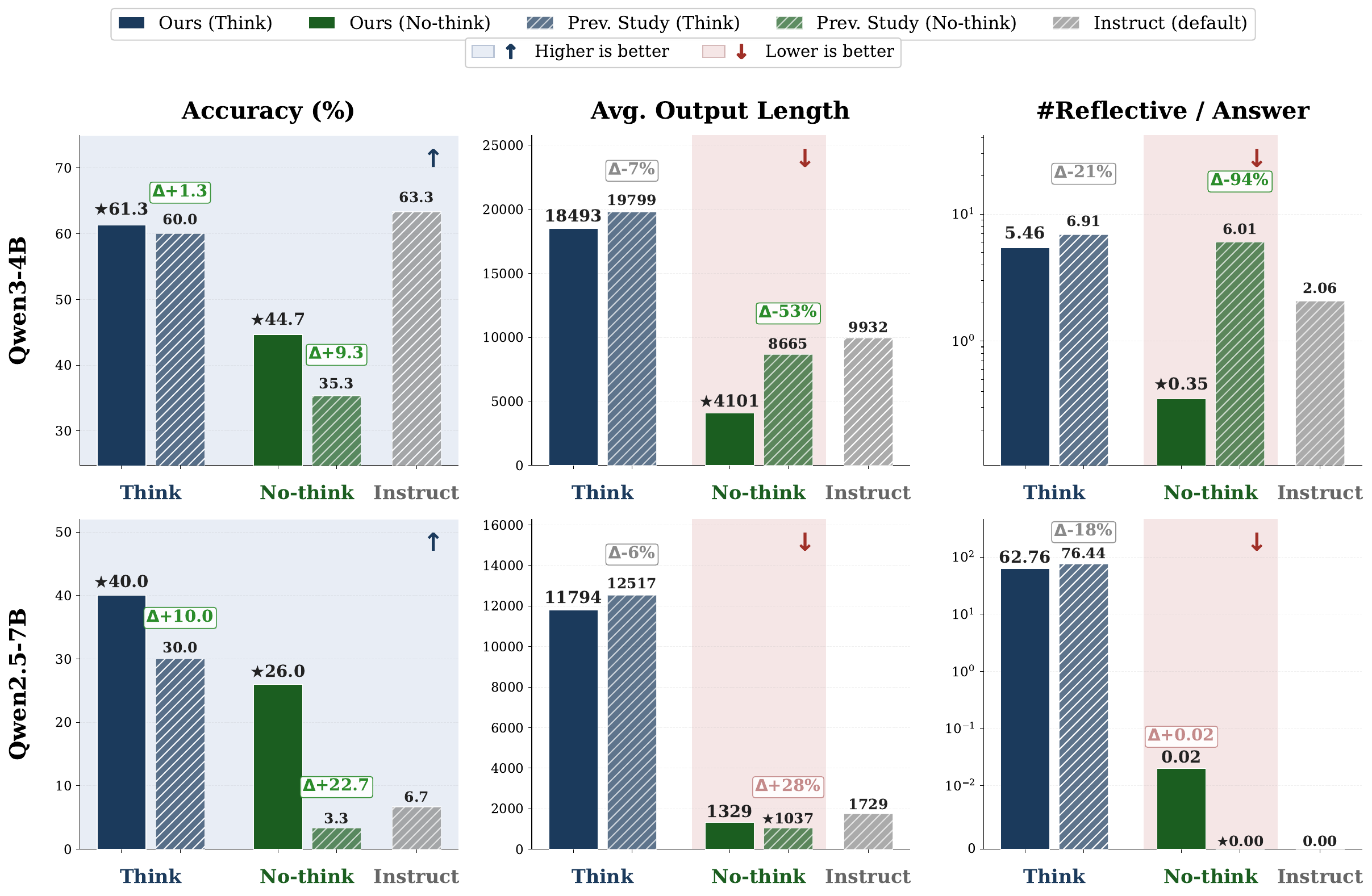}
    \caption{Accuracy, average output length, and per-answer reflective token count on AIME24 for two \ple{} instantiations (Qwen2.5-7B, top; Qwen3-4B, bottom) compared with baselines. ``Ours'' denotes \ple{} models built on each backbone; ``SFT-only'' is an SFT-only baseline in previous studies~\citep{wang2025demystifying}; ``Instruct'' is the off-the-shelf instruction-tuned backbone, shown under its single default-mode response. On Qwen2.5-7B, \ple{} matches the near-zero reflective token count of the SFT-only baseline despite much higher \nothink{} accuracy. On Qwen3-4B, \ple{} achieves substantially stronger mode separation than SFT-only: reasoning leakage drops by \textbf{17$\times$} and \nothink{} accuracy improves by \textbf{+9.3} points, while \think{}-mode performance is maintained.}
\label{fig:exp_aime24}
\end{figure}
\subsection{Experimental setup}
\label{sec:exp_setup}
\textbf{Dataset Construction}
We build our training corpus from the \textit{Superior-Reasoning}\citep{yan2026distribution} problem set, pairing its reasoning traces with \nothink{} targets synthesized by Qwen3-235B in \nothink{} mode and retained only if they match the ground-truth answer, respect domain-specific length limits, and contain no reflective tokens. The corpus holds \textbf{54k} examples: 27k \think{} traces and 27k direct answers, a 1:1 ratio. Because the \nothink{} targets are distilled from a stronger teacher, one might ask whether \ple{}'s gains reflect teacher-style imitation rather than architecture; \ple{} and its dense baselines train on identical data, so any such bias acts on both equally, and Appendix~\ref{sec:appendix_data_bias} rules it out with a controlled data-construction ablation.

\textbf{Backbone weights.}
We evaluate \ple{} across two representative model families to demonstrate its generalizability. For both families we initialize from the corresponding \textit{Instruct} checkpoint, keeping the two families structurally symmetric: \textit{Qwen2.5-7B-Instruct} for the Qwen2.5 family, and \textit{Qwen3-4B-Instruct} for the Qwen3 family. For each backbone, we initialize the \ple{} architecture by cloning their MLP weights, and fine-tune on our training corpus constructed from the \textit{Superior-Reasoning} dataset~\citep{yan2026distribution}.

\textbf{Baselines.}
We compare \ple{} (\textit{Ours}) against two baselines per family: (1) \textit{Instruct}, the off-the-shelf backbone; and (2) \textit{SFT-only}, the SFT-only recipe~\citep{wang2025demystifying}. All three models start from the same Instruct checkpoint. Results with additional backbone initializations (\textit{Qwen3-4B} and \textit{Qwen3-4B-Base}) are presented in Appendix~\ref{sec:appendix_original_qwen3_hybrid} and Appendix~\ref{sec:appendix_abl_results_base}.

\textbf{Benchmarks and Metrics.}
We evaluate all models in both \think{} and \nothink{} modes across four reasoning benchmarks: MATH500~\citep{lightman2023lets}, AIME24~\citep{aime2024}, MMLU-STEM~\citep{hendryckstest2021}, and GPQA-Diamond~\citep{rein2024gpqa}. To measure performance and mode separation, we report: (1) \textit{Accuracy (\%)} for task proficiency; (2) \textit{Average output length} to monitor verbosity; and (3) \textit{Reflective tokens per answer (\#Refl./Ans.)}—the mean count of self-reflective markers (e.g., \texttt{wait}, \texttt{hmm}, \texttt{alternatively}). As our two metric for reasoning leakage, average output length and \#Refl./Ans. are complementary rather than redundant: average output length measures verbosity, while \#Refl./Ans.\ measures whether reflective-reasoning tokens are specifically present. We validate these metrics track genuine semantic leakage rather than incidental token counts using a model-based judge (GPT-OSS-20B) applied to off-the-shelf hybrid-thinking baselines across three model sizes and three benchmarks (Appendix~\ref{sec:appendix_judge_validation}): judge verdicts agree with \#Refl./Ans.\ in 92--100\% of cases in both directions.

\subsection{Main Results}
\label{sec:main_results}

We present a comprehensive evaluation of \ple{} across two model families to verify its effectiveness in mitigating reasoning leakage while maintaining task performance. Overall, our results demonstrate that \ple{} provides a superior Pareto frontier for hybrid thinking: it delivers a \nothink{} mode that is cleaner and more proficient than prior training-level recipes, while simultaneously safeguarding (or even enhancing) the model's capacity for deep, multi-step reasoning. Our primary analysis focuses on the challenging \textit{AIME24} benchmark (\Cref{fig:exp_aime24}), which serves as a stress test for hybrid thinking due to its requirement for deep deliberation; results for \textit{MATH500} are consistent with these findings and are provided in \Cref{fig:exp_math500} (with additional details in Appendix~\ref{sec:appendix_exp_results}).

\begin{table}[t]
    \centering
    \caption{\textbf{Reasoning leakage and accuracy in the \nothink{} (direct-answer) mode.} \#Refl./Ans.: mean reflective token count per response (e.g., \texttt{wait}, \texttt{hmm}). \textbf{Bold} marks \textit{PLE (Ours)}. \textit{Hybrid} and \textit{Instruct} report each off-the-shelf model's own default/\nothink{} behavior; coming from different post-training lineages, their accuracies are not directly comparable across lineages---the controlled comparison is between \textit{SFT-only} and \textit{PLE (Ours)}, both initialized from the same \textit{Instruct} checkpoint. \textit{SFT-only}: the two-phase SFT recipe of \citet{wang2025demystifying} on the same backbone. \think{}-mode performance is preserved (\Cref{fig:exp_aime24,fig:exp_math500,fig:exp_composite_gpqa,fig:exp_composite_mmlu_stem}).}
    \vspace{-8pt}
    \label{tab:exp_leakage}
    \resizebox{\textwidth}{!}{
    \begin{tabular}{ccccccccc}
    \toprule
    \multirow{2}{*}{Model} & \multirow{2}{*}{Type}
    & \multicolumn{3}{c}{AIME24} & \multicolumn{3}{c}{MATH500} \\
    \cmidrule(lr){3-5} \cmidrule(lr){6-8}
    & & Acc. & Len. & \#Refl./Ans. & Acc. & Len. & \#Refl./Ans. \\
    \midrule
    \multirow{4}{*}{Qwen3-4B}
     & Hybrid       & 23.33 & 12506 & 0.17 & 82.60 & 1583 & 0.04 \\
     & Instruct     & 63.33 & 9932 & 2.06 & 94.52 & 1812.55 & 0.36 \\
     & SFT-only     & 35.33 & 8665 & 6.01 & 83.28 & 1040.37 & 0.15 \\
     & \textbf{PLE (Ours)} & \textbf{44.67} & \textbf{4101} & \textbf{0.35} & \textbf{82.36} & \textbf{702.23} & \textbf{0.05} \\
    \midrule
    \multirow{3}{*}{Qwen2.5-7B}
     & Instruct     & 6.67  & 1729 & 0    & 59.94 & 703.11 & 0    \\
     & SFT-only     & 3.33  & 1037 & 0.00 & 63.56 & 593.09 & 0.31 \\
     & \textbf{PLE (Ours)} & \textbf{26.00} & \textbf{1329} & \textbf{0.02} & \textbf{83.20} & \textbf{614.46} & \textbf{0.01} \\
    \bottomrule
    \end{tabular}
    }

    \vspace{6pt}
    \resizebox{\textwidth}{!}{
    \begin{tabular}{ccccccccc}
    \toprule
    \multirow{2}{*}{Model} & \multirow{2}{*}{Type}
    & \multicolumn{3}{c}{GPQA-Diamond} & \multicolumn{3}{c}{MMLU-STEM} \\
    \cmidrule(lr){3-5} \cmidrule(lr){6-8}
    & & Acc. & Len. & \#Refl./Ans. & Acc. & Len. & \#Refl./Ans. \\
    \midrule
    \multirow{4}{*}{Qwen3-4B}
     & Hybrid       & 46.97 & 2679 & 0.05 & 85.95 & 483.79 & 0.01 \\
     & Instruct     & 42.32 & 445.08 & 0.14 & 87.54 & 295.95 & 0.05 \\
     & SFT-only     & 45.96 & 2513.78 & 5.34 & 86.71 & 263.92 & 0.06 \\
     & \textbf{PLE (Ours)} & \textbf{45.86} & \textbf{736.34} & \textbf{0.23} & \textbf{86.61} & \textbf{180.19} & \textbf{0.02} \\
    \midrule
    \multirow{3}{*}{Qwen2.5-7B}
     & Instruct     & 30.15 & 775  & 0    & 62.00 & 211 & 0    \\
     & SFT-only     & 30.00 & 2910 & 3.22 & 59.44 & 774 & 0.83 \\
     & \textbf{PLE (Ours)} & \textbf{30.00} & \textbf{3857} & \textbf{0.00} & \textbf{92.22} & \textbf{139} & \textbf{0.00} \\
    \bottomrule
    \end{tabular}
    }
\end{table}

\paragraph{Effective Mode Separation with Minimal Leakage.}
\Cref{tab:exp_leakage} reports \nothink{}-mode behavior for every model; we begin with mode separation.
On Qwen3-4B, \ple{} cuts reasoning leakage \textbf{17$\times$} below the SFT-only baseline (\textbf{6.01} to \textbf{0.35} reflective tokens per answer) and raises \nothink{} accuracy by \textbf{+9.3} points (35.33\% to 44.67\%), while \think{}-mode performance is maintained (61.33\% vs.\ 60.00\%). SFT-only is only partially effective: it halves output length between modes (19{,}799 to 8{,}665 tokens) but leaves reflective content untouched (6.91 to 6.01), whereas \ple{} halves length to 4{,}101 tokens and removes the reflective content (\Cref{fig:exp_aime24}).

\paragraph{Significant Enhancement of \nothink{} Proficiency.}

\ple{}'s \nothink{} mode is not merely cleaner but more capable, consistently across both families. On Qwen3-4B it answers at \textbf{44.67\%}---\textbf{+9.3} points over the SFT-only recipe---in \textbf{4{,}101} tokens at \textbf{0.35} reflective tokens, and the margin is larger on Qwen2.5-7B (\textbf{26.00\%} vs.\ 3.33\%, \textbf{+22.7} points) at 0.02 reflective tokens (\Cref{tab:exp_leakage}). These gains are not won on easy ground: despite its name, \textit{Qwen3-4B-Instruct} is a non-thinking release of a hybrid-thinking base~\citep{yang2025qwen3,qwen2507card2026}, reasoning at length with no direct-answer path (Appendix~\ref{sec:appendix_instruct_probe}). From that the SFT-only recipe supplies a path but leaks through it (8{,}665 tokens at 6.01); \ple{} makes it short and accurate.

\paragraph{Preservation and Synergy of Reasoning Capabilities.}
A vital requirement for \ple{} is that architectural isolation must not degrade the model's core reasoning power. Our results confirm that the \think{} mode's performance remains robust across both model families. Interestingly, we observe a synergistic effect in the Qwen2.5-7B-Instruct family: the \think{} mode of \ple{} actually \textbf{outperforms} the original model on AIME24 accuracy (\Cref{fig:exp_aime24}). This indicates that sharing the attention backbone preserves the underlying logic while the dual-expert MLP structure lets the \think{} mode focus on deliberation.

\paragraph{Generalization Beyond Qwen and Math/STEM.}
To test whether \ple{}'s mode separation is specific to the Qwen family or to mathematical reasoning, we extend our evaluation to a structurally different model family, \textit{Llama-3.1-8B}, and to a broad general-knowledge benchmark, full \textit{MMLU}. Mode separation holds throughout---on Llama-3.1-8B, \ple{}'s \nothink{} mode is the most accurate no-think model on all four benchmarks tested, most strikingly on MMLU-STEM (76.7\% vs.\ 53.0--53.4\%)---so \ple{} spans three model families and five benchmark types; see Appendix~\ref{sec:appendix_generalization} for full results.

\section{Ablation Studies}\label{sec:ablation}

Beyond the main evaluations in \Cref{sec:experiments}, we conduct a series of ablation studies to dissect the impact of various design choices on \ple{}'s performance and mode separation behavior. Our analysis reveals that two primary factors significantly dictate the model's efficacy: the selection of \textbf{backbone weights} for initialization and the composition of the \textbf{training dataset}. In the following subsections, we explore how these factors influence the delicate balance between reasoning power and leakage mitigation.

\subsection{Effect of backbone initialization}\label{sec:ablation_base}

The choice of which pretrained model to use as the backbone for \ple{} is a critical design decision.
We compare four backbones: 1) \textit{Qwen2.5-7B-Instruct}, 2) \textit{Qwen3-4B-Instruct}, the backbone used for our main-text Qwen3 results (\Cref{sec:experiments}), 3) \textit{Qwen3-4B}, and 4) \textit{Qwen3-4B-Base} (the pretrained model without any post-training).

All variants are trained with identical hyperparameters to isolate the effect of backbone weights. \Cref{fig:ablation_base_aime24} provides the results on the AIME24 task; more comprehensive experimental results are provided in Appendix~\ref{sec:appendix_abl_results_base}.

\begin{figure}[t]
  \centering
  \includegraphics[width=1.0\linewidth]{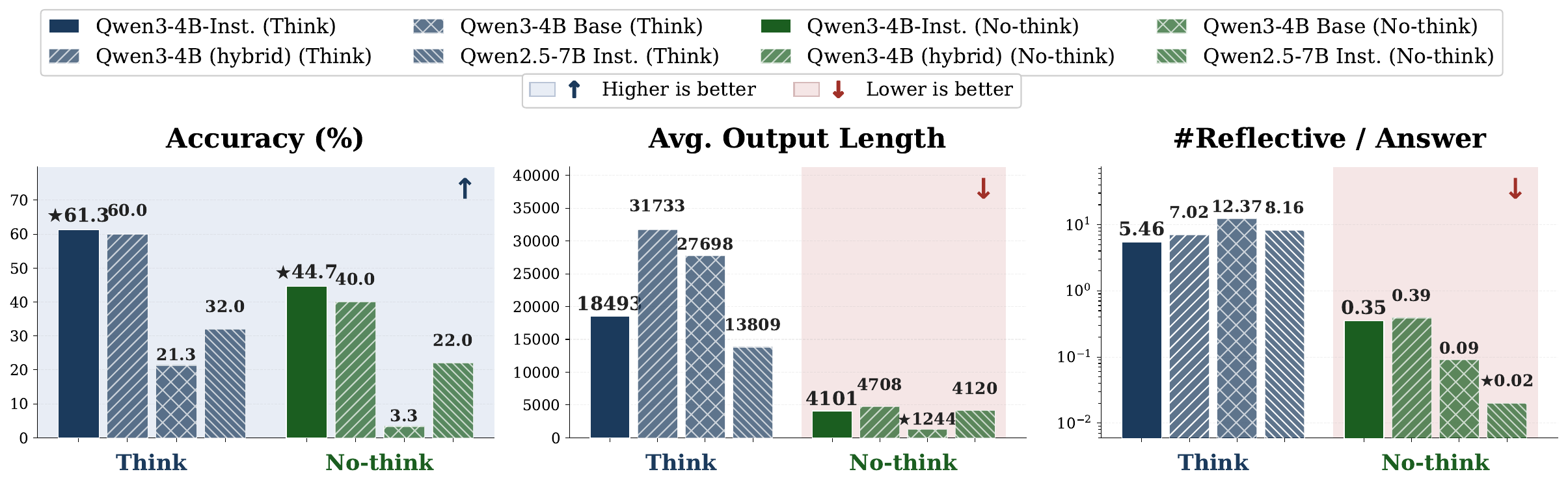}
  \caption{Ablation on backbone choice (AIME24). PLE is initialized from ``Qwen2.5-7B Instruct'' (pure instruction-tuned weights), ``Qwen3-4B Instruct'', ``Qwen3-4B Base'' (raw pretrained weights), or ``Qwen3-4B'' (hybrid-thinking weights). Residual \nothink{} leakage orders monotonically with each initialization's own reasoning behavior on both dimensions (reflective tokens/answer 0.02 $<$ 0.35 $<$ 0.39, with \nothink{} output length in the same order), at some cost in accuracy. The pretrained base collapses on this high-difficulty task, confirming that PLE requires a backbone with existing instruction-following capabilities.}
  \label{fig:ablation_base_aime24}
\end{figure}

\textbf{Comparison between Three Backbones: Effect of Retained Latent Reasoning Behavior.}
Comparing \textit{Qwen3-4B} (a hybrid-thinking model) with \textit{Qwen2.5-7B-Instruct} (a pure instruction-tuned model) reveals a clear divergence in behavior. We find that using a pure \textit{Instruct} model as the base generally yields superior mode separation, characterized by a ``cleaner'' \nothink{} mode with virtually zero leakage. In contrast, using the hybrid \textit{Qwen3-4B} leads to higher absolute performance (accuracy) but carries a persistent residue of reasoning artifacts.

We hypothesize that hybrid-thinking models carry reasoning-heavy patterns from their intensive post-training. This residual reasoning behavior is deeply embedded in the shared attention representations, making it difficult to fully extinguish reflective behaviors in the \nothink{} mode, even with architectural isolation. However, because the hybrid base already possesses a stronger foundation for the \think{} mode, it ultimately achieves higher task scores, suggesting that the initial ``reasoning literacy'' of the weights sets a higher ceiling for performance.
Not every instruction-tuned checkpoint is equally free of these patterns: \textit{Qwen3-4B-Instruct} is derived from a hybrid-thinking base and released as a non-thinking update within the same family~\citep{yang2025qwen3,qwen2507card2026}. Accordingly, residual \nothink{} leakage after \ple{} fine-tuning orders with each backbone's own reasoning behavior on both dimensions: reflective tokens per answer run 0.02 $<$ 0.35 $<$ 0.39, and AIME24 \nothink{} length follows the same order, 1{,}329 $<$ 4{,}101 $<$ 5{,}597 tokens.

\textbf{The Necessity of Post-training: Qwen3-4B vs. Qwen3-Base.}
To isolate the impact of post-training, we evaluated \textit{Qwen3-4B-Base} (the raw pretrained weights). While this base shows reasonable performance on simpler benchmarks like MATH500, its performance collapses on the high-difficulty \textit{AIME24} tasks. In many cases, the base-initialized \ple{} fails to converge on a coherent reasoning strategy, often exceeding token limits without reaching a solution. This result underscores that \ple{} requires a backbone that has already acquired basic instruction-following and logical structuring capabilities. Relying solely on the PLE to ``learn'' reasoning from scratch is insufficient for complex domains.

\subsection{Effect of training dataset}\label{sec:ablation_dataset}
We investigate the impact of dataset quality and composition on \ple{} training.
Specifically, we compare the \textit{OpenR1} dataset used in \citet{wang2025demystifying}, constructed from OpenR1-Math~\citep{openr1} and \textit{Superior-reasoning}: our improved dataset featuring harder reasoning questions and higher-quality responses. 
\begin{figure}[t]
    \centering
    \includegraphics[width=1.0\linewidth]{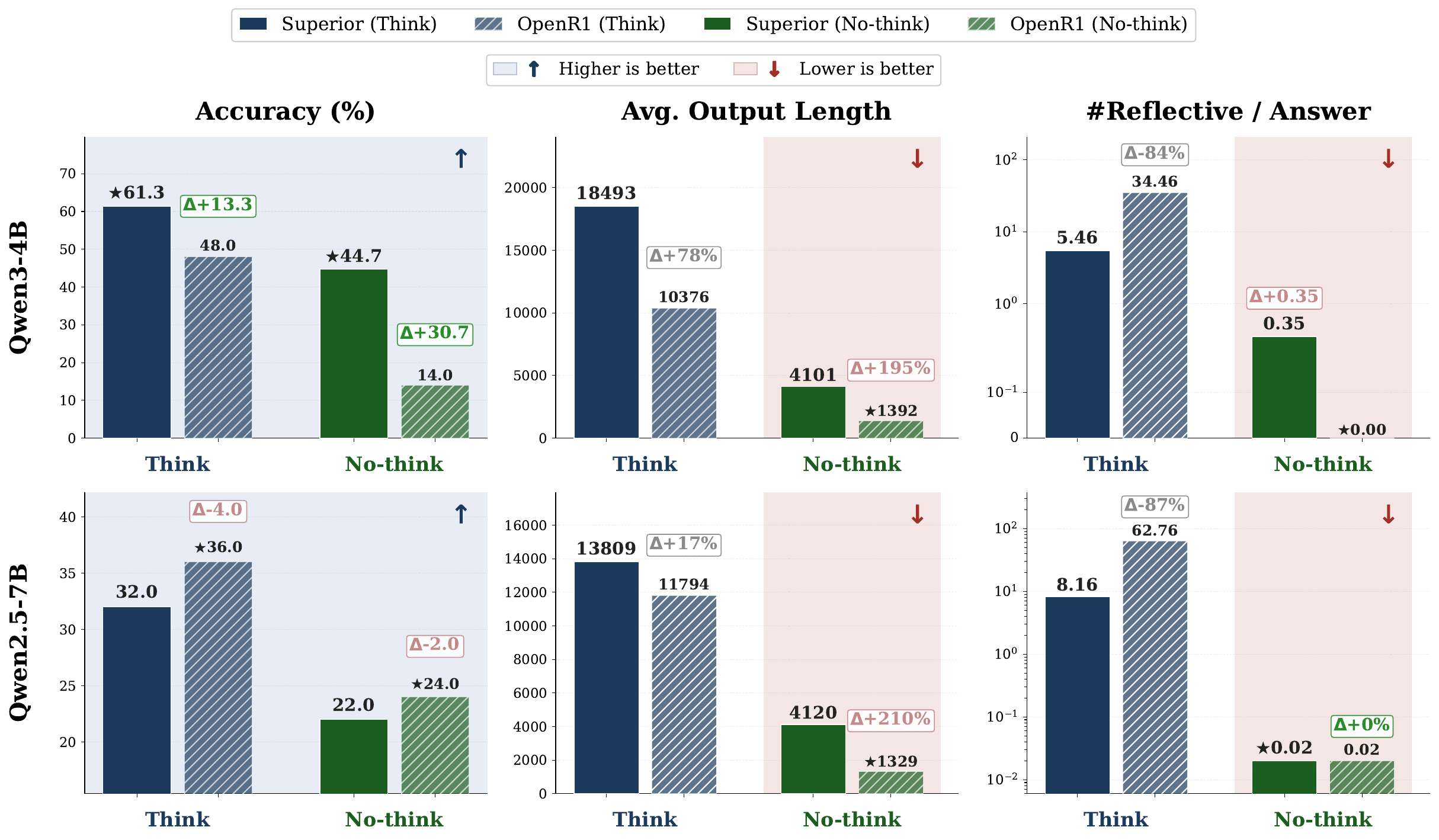}
    \caption{Ablation on training dataset (AIME24) for two backbones: \textit{Qwen3-4B-Instruct} (top) and \textit{Qwen2.5-7B-Instruct} (bottom). ``Superior'': \ple{} trained on our high-difficulty dataset with longer CoT traces; ``OpenR1'': on simpler data. On Qwen3-4B-Instruct, Superior wins in both modes (61.33\% vs.\ 48.00\% \think{}; 44.67\% vs.\ 14.00\% \nothink{}) and reasons more efficiently (5.46 vs.\ 34.46 reflective tokens in \think{}). On Qwen2.5-7B, OpenR1 remains competitive---dataset difficulty must match backbone capacity.}
    \label{fig:ablation_dataset_aime24_inst}
\end{figure}
Our findings are summarized in \Cref{fig:ablation_dataset_aime24_inst} and \Cref{tab:ablation_dataset_by_base_weight} (Appendix~\ref{sec:appendix_abl_results_dataset}), and point to a trade-off between proficiency and mode purity.

We investigate the impact of dataset quality and difficulty on \ple{} training by comparing two distinct sources: the \textbf{OpenR1} dataset (a relatively simpler collection) and our \textbf{Superior-reasoning} dataset (higher difficulty and longer reasoning chains). Our analysis, supported by \Cref{tab:ablation_dataset_by_base_weight} and \Cref{fig:ablation_dataset_aime24_inst}, reveals how data composition interacts with architectural experts.

\textbf{Difficulty-Capacity Alignment.}
A key observation from our ablation is that the effectiveness of a dataset is closely tied to the inherent capability of the backbone. As shown in \Cref{tab:ablation_dataset_by_base_weight}, the simpler \textit{OpenR1} dataset yields better relative gains when paired with the \textit{Qwen2.5-7B-Instruct} backbone. Conversely, the more challenging \textit{Superior-reasoning} dataset excels when paired with the more capable \textit{Qwen3-4B} model. 

For the Qwen3-4B backbone, the advantage of the superior dataset is particularly pronounced in the \nothink{} mode, where it significantly outperforms the OpenR1 baseline in accuracy. This suggests that as backbones become more proficient, they require higher-quality, ``harder'' data to fully specialize their dual experts. We conclude that dataset ``difficulty'' must be aligned with the backbone's capacity to achieve optimal results.

\textbf{The Leakage-Performance Trade-off.}
While the \textit{Superior-reasoning} dataset provides clear performance benefits, it also induces more noticeable reasoning leakage compared to OpenR1. We attribute this to the nature of the data: the superior dataset contains significantly longer and more complex chain-of-thought (CoT) traces. These high-intensity reasoning patterns appear to be more ``contagious,'' exerting greater pressure on the architectural separation and making the \nothink{} expert more prone to emitting reflective artifacts. 

This represents a fundamental trade-off in hybrid thinking: achieving the highest levels of reasoning proficiency often requires sacrificing a degree of mode-control purity. In most practical scenarios, we argue that this trade-off is acceptable, as the gains in \nothink{} accuracy and \think{} depth outweigh the presence of minor leakage artifacts.

\subsection{Isolating the architectural contribution from added parameters}\label{sec:ablation_specialists}
Do \ple{}'s gains come from deterministic routing, or merely from extra parameters? From an identical \textit{Qwen3-4B-Base} checkpoint we train a \textit{think-only} and a \textit{no\_think-only} specialist on matched data, and compare both against \textit{Base-PLE} trained on the combined data and against a naive weight-merge of the two specialists. Across MATH500, AIME24, and GPQA-Diamond, \textit{Base-PLE} matches or exceeds both dedicated specialists within one model, while the weight-merge collapses accuracy on AIME24 and MATH500 and still leaks reasoning in \nothink{} mode (\Cref{tab:specialists_summary}; full tables in Appendix~\ref{sec:appendix_specialists}).

An \textbf{initialization-equivalence test} confirms the source: before fine-tuning, \ple{} is token-for-token identical to the backbone on 12/12 prompt--mode pairs and all 108/108 tensors match numerically, so the extra expert contributes nothing until training decouples the two objectives. Relative to deploying two independent models, \textbf{\ple{} saves up to 33\%} of a single model's parameters ($\approx$1.67$\times$ one model in total), at single-dense-model inference cost (\Cref{tab:appendix_routing_params}). Appendix~\ref{sec:appendix_data_bias} separately rules out teacher-style bias from the distilled \nothink{} targets.

\section{Related Work}\label{sec:related_work}

\textbf{LLM Reasoning.}
Reasoning in LLMs has been advanced through RL~\citep{guo2025deepseek, jaech2024openai, team2025kimi, shao2024deepseekmath, xie2025logicrlunleashingllmreasoning, zeng2025simplerlzooinvestigatingtamingzero} and SFT~\citep{li2025llmseasilylearnreason, muennighoff2025s1simpletesttimescaling, ye2025limoreasoning, ji2025first}. Our work is orthogonal: rather than improving reasoning quality, we focus on \emph{controlling when reasoning occurs} through architectural design.

\textbf{Hybrid Thinking and Efficient Reasoning.}
Hybrid thinking~\citep{sui2025stop} lets models switch between chain-of-thought and direct answering via control tokens, and has been adopted by several recent systems~\citep{gemini25, yang2025qwen3, agarwal2025gpt, deepseekv31}. However, reasoning behaviors still leak into \nothink{} mode. \citet{wang2025demystifying} showed that training-level interventions---data scale, ratio tuning, multi-stage schedules---reduce but cannot eliminate this leakage. Other efficiency-oriented approaches include CoT compression~\citep{xia2025tokenskip, zhang2025lightthinker, xu2025chain, aytes2025sketch}, early stopping~\citep{fu2024efficiently}, and preference optimization~\citep{reduce_overthinking_2025}. Unlike all of these, we address leakage at the \emph{architectural level}.

\textbf{Mixture-of-Experts.}
MoE architectures~\citep{shazeer2017outrageously, fedus2022switch, lepikhin2021gshard, jiang2024mixtral, dai2024deepseekmoe} use learned token-level routers for conditional computation, but face challenges including load imbalance, training instability, and the need for auxiliary balancing losses. Most relevant to our setting, Metis-HOME~\citep{lan2025metishome} applies a two-expert MoE with a learned router to hybrid thinking, differing from \ple{} in routing (a trainable gate vs.\ our parameter-free control-token lookup) and in modality (vision-language vs.\ text-only). Substituting a learned router into our setting would change the research question from ``can deterministic routing achieve clean mode separation?'' to ``which router design wins?''---a different paper. We compare the design space in Appendix~\ref{sec:appendix_related_work_extended} (\Cref{tab:moe_comparison}).

\section{Conclusion}\label{sec:conclusion}
In this paper, we argued that reasoning leakage in hybrid-thinking models is partly an architectural problem. We introduced Path-Lock Expert (PLE), a minimal modification that replaces each decoder MLP with two semantically locked experts routed deterministically by control tokens. Across math and science benchmarks on two model families, PLE substantially strengthens the \nothink{} mode---reducing reflective tokens, shortening outputs, and improving accuracy---while preserving \think{} performance. Our ablations further show that PLE's effectiveness depends on backbone initialization and dataset--capacity alignment. These results demonstrate that separating mode-specific feed-forward pathways is a straightforward and effective complement to training-level interventions for achieving more controllable hybrid thinking models.

\section*{Acknowledgments}
This work was supported in part by NSF awards 2117439 and 2112606.

The first author would like to express his sincere gratitude to Yanbo Zhao, whose guidance and
influence inspired him to pursue computer science and AI research, and whose care and support have
accompanied him throughout this journey. He would also like to thank Kaijun Tan for his continued
support and for generously sharing his practical advice and valuable insights on LLM engineering.

\section*{LLM Usage Statement}
The authors used LLM-based assistants to help with LaTeX editing, polishing writing, proofreading, and code implementation; all content was reviewed and verified by the authors.

\bibliography{colm2026_conference}
\bibliographystyle{colm2026_conference}

\newpage
\appendix
\section{Appendix}
\subsection{Supplemental Results for Main Experiment Results}
\label{sec:appendix_exp_results}
\begin{figure}[h!]
    \centering
    \includegraphics[width=1.0\linewidth]{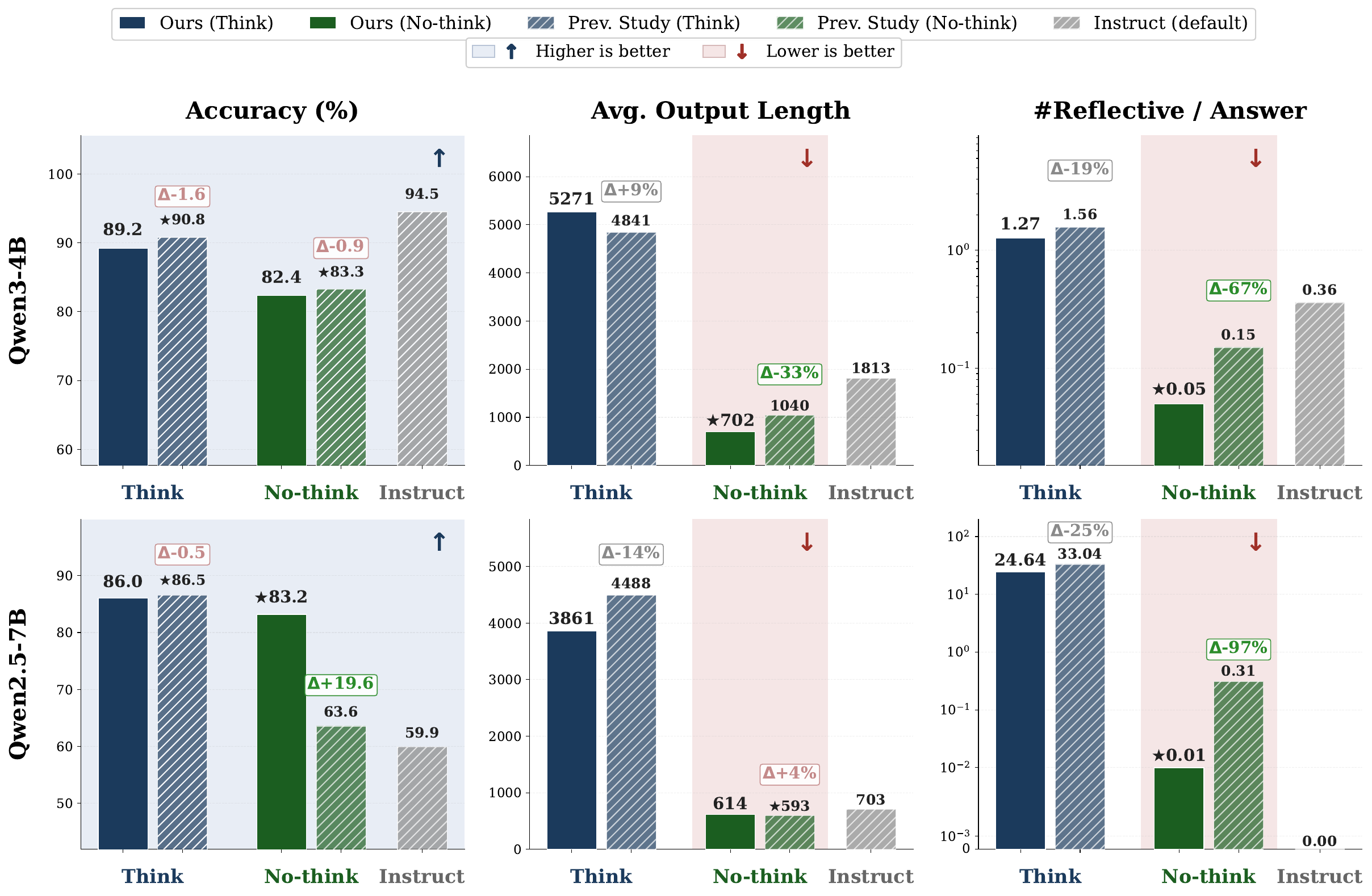}
    \caption{Accuracy, average output length, and per-answer reflective token count on MATH500 for two \ple{} instantiations (Qwen2.5-7B, top; Qwen3-4B, bottom) compared with baselines, in the same format as \Cref{fig:exp_aime24}. ``Instruct'' is shown under its single default-mode response (no control-token suffix appended).}
\label{fig:exp_math500}
\end{figure}

\begin{table}[h!]
    \centering
    \caption{Reasoning leakage comparison on \textbf{MMLU-STEM} and \textbf{MMLU-full}. Same setup and notation as \Cref{tab:exp_leakage}.}
    \vspace{-8pt}
    \label{tab:exp_leakage_mmlu_gpqa}
    \resizebox{\textwidth}{!}{
    \begin{tabular}{ccccccccc}
    \toprule
    \multirow{2}{*}{Model} & \multirow{2}{*}{Type} & \multirow{2}{*}{Mode}
    & \multicolumn{3}{c}{MMLU-STEM} & \multicolumn{3}{c}{MMLU-full} \\
    \cmidrule(lr){4-6} \cmidrule(lr){7-9}
    & & & Acc. & Len. & \#Refl./Ans. & Acc. & Len. & \#Refl./Ans. \\
    \midrule
    \multirow{6}{*}{Qwen2.5-7B}
     & \multirow{2}{*}{Instruct}      & Think    & N/A   & N/A  & N/A  & N/A   & N/A  & N/A  \\
     &                                 & No-think & 62.00 & 211  & 0    & 73.82 & 275.01  & 0.0000    \\
    \cmidrule(lr){2-9}
     & \multirow{2}{*}{SFT-only}      & Think    & 85.82 & 3100 & 24.74 & -- & --  & -- \\
     &                                 & No-think & 59.44 & 774  & 0.83  & -- & --  & --  \\
    \cmidrule(lr){2-9}
     & \multirow{2}{*}{Ours}          & Think    & 88.89 & 429  & 0     & -- & --  & -- \\
     &                                 & No-think & 92.22 & 139  & \textbf{0.00} & 76.28 & 357.27 & \textbf{0.0000} \\
    \midrule
    \multirow{6}{*}{Qwen3-4B}
     & \multirow{2}{*}{Instruct}       & Think    & N/A   & N/A     & N/A   & N/A   & N/A      & N/A   \\
     &                                & No-think & 87.54 & 295.95 & 0.05 & 79.79 & 264.88 & 0.02 \\
    \cmidrule(lr){2-9}
     & \multirow{2}{*}{SFT-only}       & Think    & 90.96 & 1491.51 & 0.22 & 78.43 & 2777.88 & 2.69 \\
     &                                & No-think & 86.71 & 263.92  & 0.06 & 75.58 & 188.76  & 0.16  \\
    \cmidrule(lr){2-9}
     & \multirow{2}{*}{Ours}          & Think    & 91.06 & 1606.65 & 0.49 & 78.17 & 3175.23 & 2.85 \\
     &                                & No-think & 86.61 & 180.19  & \textbf{0.02} & 75.60 & 117.18 & \textbf{0.00} \\
    \bottomrule
    \end{tabular}
    }
\end{table}

\subsection{Results with the Native Hybrid Qwen3-4B Backbone}
\label{sec:appendix_original_qwen3_hybrid}

These tables report \ple{} initialized from the native hybrid-thinking \textit{Qwen3-4B} checkpoint, rather than from the \textit{Qwen3-4B-Instruct} checkpoint used for the Qwen3 family in the main text (\Cref{sec:exp_setup}). Initializing both families from an Instruct checkpoint keeps the main-text comparison structurally symmetric; the hybrid-initialized variant is reported here for reference.

\begin{table}[h!]
    \centering
    \caption{Results with the native hybrid-thinking \textit{Qwen3-4B} backbone on MATH500 and AIME24. ``Hybrid'' is the off-the-shelf checkpoint; ``Ours'' is \ple{} initialized from it. No SFT-only baseline was run on this backbone.}
    \vspace{-8pt}
    \label{tab:appendix_original_qwen3_hybrid_math_aime}
    \resizebox{\textwidth}{!}{
    \begin{tabular}{cccccccc}
    \toprule
    \multirow{2}{*}{Type} & \multirow{2}{*}{Mode}
    & \multicolumn{3}{c}{MATH500} & \multicolumn{3}{c}{AIME24} \\
    \cmidrule(lr){3-5} \cmidrule(lr){6-8}
    & & Acc. & Len. & \#Relf./Ans. & Acc. & Len. & \#Relf./Ans.\\
    \midrule
    \multirow{2}{*}{Hybrid}       & Think    & 92.02 & 4679.54 & 19.60 & 61.67 & 11595.51 & 45.85 \\
                                & No-think & 82.22 & 1004.13 & 0.12  & 20.67 & 4636.37  & 2.54  \\
    \midrule
    \multirow{2}{*}{Ours}         & Think    & 94.8   &  6050.45  & 3.56  & 60.00 & 31733.18 & 7.02 \\
                                & No-think & 80.80 & 676.78 & 0.13 & 40.0 & 5597.0 & 0.39 \\
    \bottomrule
    \end{tabular}
    }
\end{table}

\begin{table}[h!]
    \centering
    \caption{Results with the native hybrid-thinking \textit{Qwen3-4B} backbone on MMLU-STEM and GPQA-Diamond. Same setup as \Cref{tab:appendix_original_qwen3_hybrid_math_aime}.}
    \vspace{-8pt}
    \label{tab:appendix_original_qwen3_hybrid_mmlu_gpqa}
    \resizebox{\textwidth}{!}{
    \begin{tabular}{cccccccc}
    \toprule
    \multirow{2}{*}{Type} & \multirow{2}{*}{Mode}
    & \multicolumn{3}{c}{MMLU-STEM} & \multicolumn{3}{c}{GPQA-Diamond} \\
    \cmidrule(lr){3-5} \cmidrule(lr){6-8}
    & & Acc. & Len. & \#Refl./Ans. & Acc. & Len. & \#Refl./Ans. \\
    \midrule
    \multirow{2}{*}{Hybrid}        & Think    & 90.26 & 2338 & 16.82 & 53.13 & 7615  & 67.01 \\
                                 & No-think & 84.49 & 612  & 0.01  & 42.83 & 1504  & 0.08  \\
    \midrule
    \multirow{2}{*}{Ours}          & Think    & 92.22 & 1041 & 0     & 48.00 & 25741 & 5.14  \\
                                 & No-think & 83.70 & 253  & 0.01 & 40.00 & 402  & 0.00 \\
    \bottomrule
    \end{tabular}
    }
\end{table}

\begin{figure}[t]
    \centering
    \includegraphics[width=1.0\linewidth]{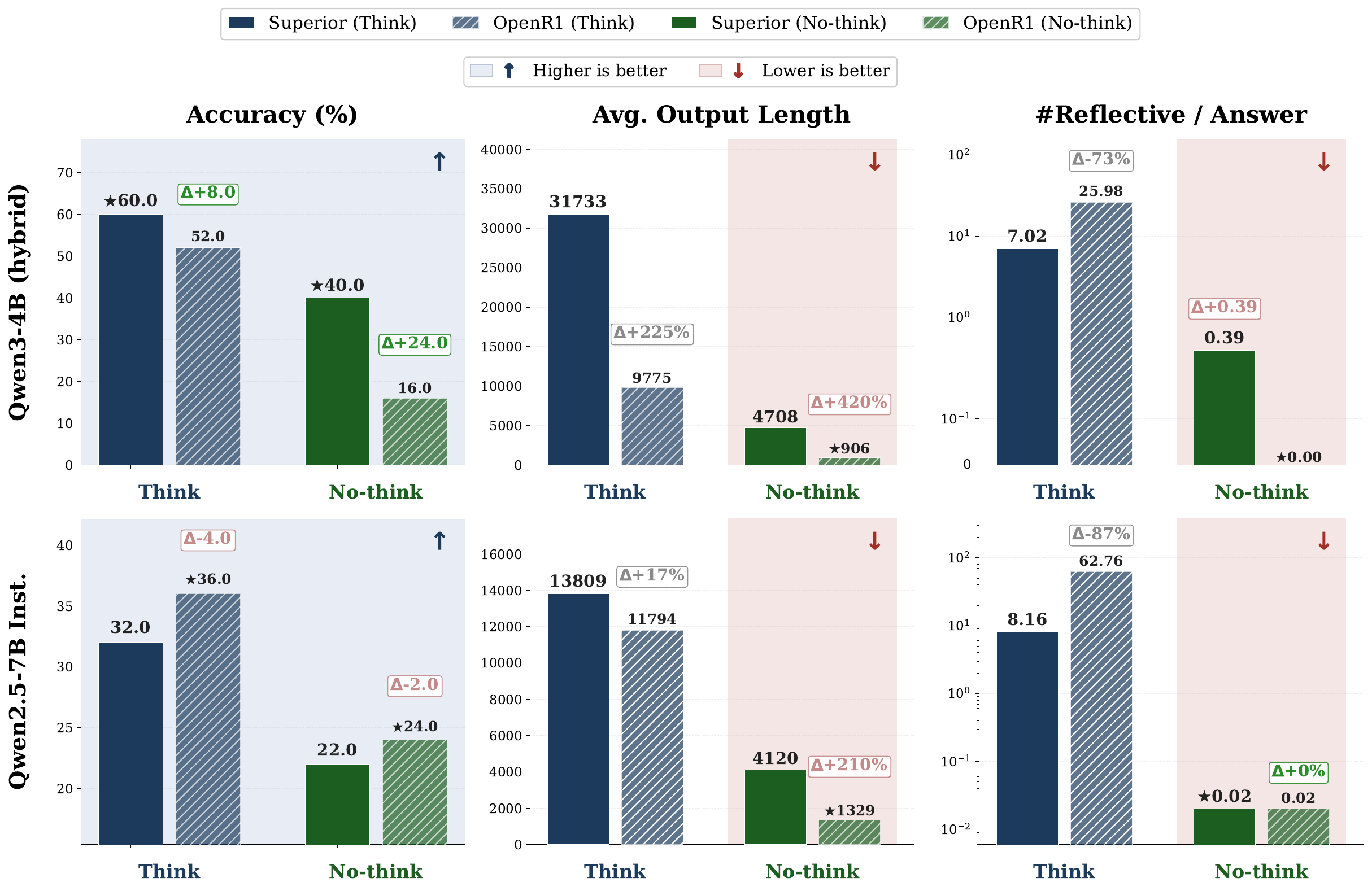}
    \caption{Ablation on training dataset (AIME24) with the native hybrid-thinking \textit{Qwen3-4B} backbone. ``Superior'': PLE trained on our high-difficulty dataset with longer CoT traces; ``OpenR1'': trained on simpler data. Results for Qwen3-4B (top) and Qwen2.5-7B (bottom). On the stronger Qwen3-4B backbone, Superior yields higher accuracy and fewer \think{} reflective tokens (7.02 vs.\ 26), suggesting higher-quality data enables more efficient reasoning. On the weaker Qwen2.5-7B, OpenR1 performs comparably, indicating dataset difficulty must align with backbone capacity. Both maintain near-zero \nothink{} leakage, though Superior produces slightly more reflective artifacts (0.39 vs.\ 0 on Qwen3-4B), illustrating a leakage--performance trade-off.}
    \label{fig:ablation_dataset_aime24}
\end{figure}

\subsection{Additional Main-Results Figures}
\label{sec:appendix_additional_figures}

Composite accuracy/length/reflective-token figures for the remaining three benchmarks not already shown in \Cref{fig:exp_aime24} (AIME24, main text) or \Cref{fig:exp_math500} (MATH500, above), in the same format. ``Instruct'' rows in all three figures show each model's single mode response (no control-token suffix appended).

\begin{figure}[h!]
    \centering
    \includegraphics[width=1.0\linewidth]{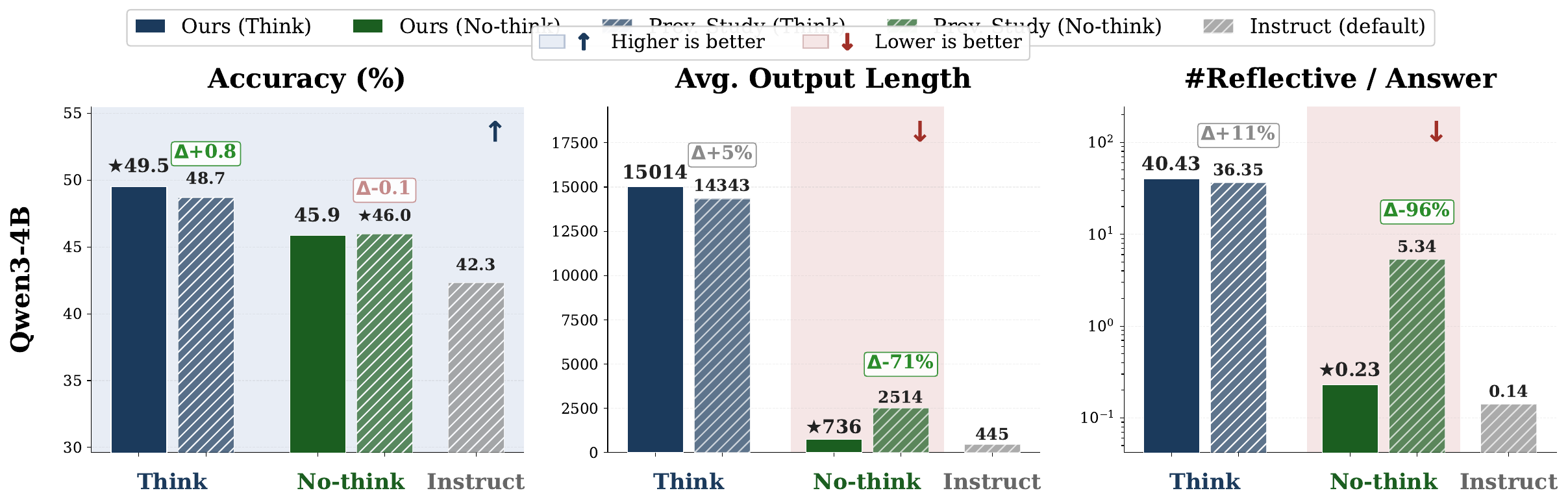}
    \caption{Accuracy, average output length, and per-answer reflective token count on GPQA-Diamond for two \ple{} instantiations (Qwen2.5-7B, top; Qwen3-4B, bottom) compared with baselines, in the same format as \Cref{fig:exp_aime24}. ``Instruct'' is shown under its single default-mode response (no control-token suffix appended).}
    \label{fig:exp_composite_gpqa}
\end{figure}

\begin{figure}[h!]
    \centering
    \includegraphics[width=1.0\linewidth]{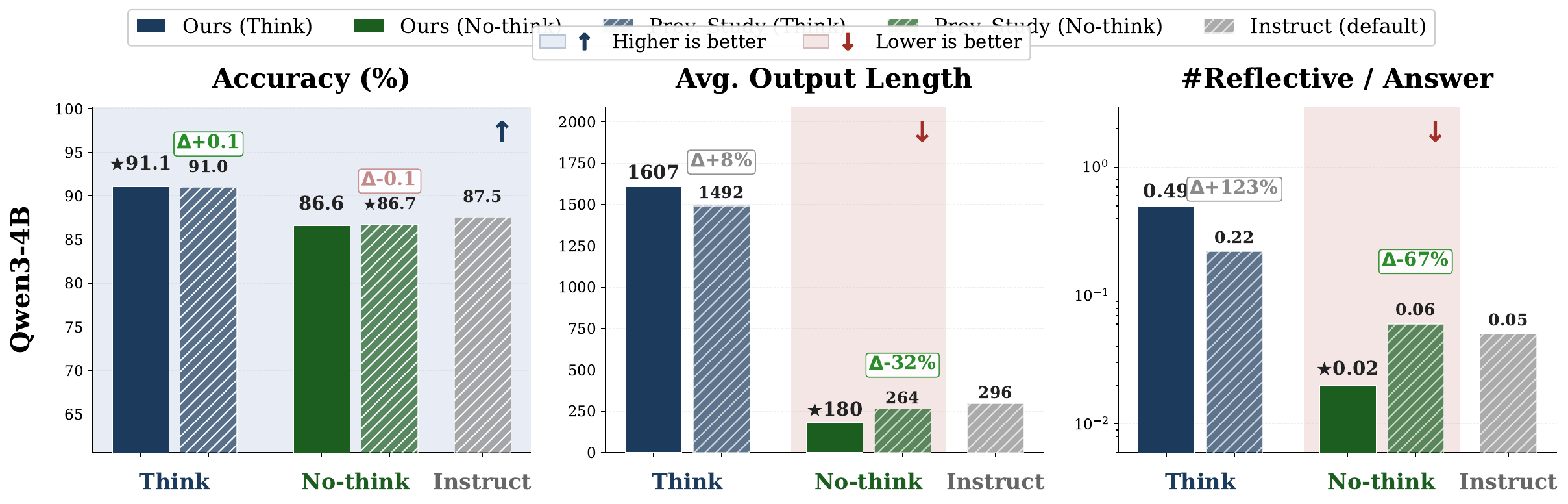}
    \caption{Accuracy, average output length, and per-answer reflective token count on MMLU-STEM for two \ple{} instantiations (Qwen2.5-7B, top; Qwen3-4B, bottom) compared with baselines, in the same format as \Cref{fig:exp_aime24}. ``Instruct'' is shown under its single default-mode response (no control-token suffix appended).}
    \label{fig:exp_composite_mmlu_stem}
\end{figure}

\begin{figure}[h!]
    \centering
    \includegraphics[width=1.0\linewidth]{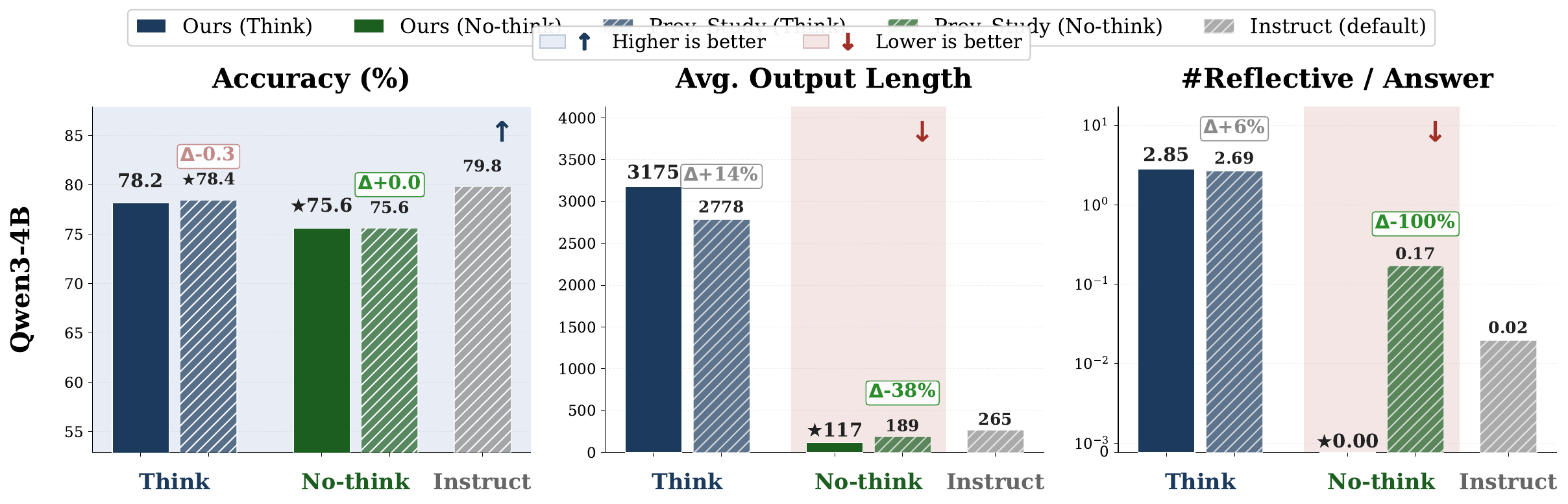}
    \caption{Accuracy, average output length, and per-answer reflective token count on full MMLU for the Qwen3-4B \ple{} instantiation compared with baselines, in the same format as \Cref{fig:exp_aime24}. ``Instruct'' is shown under its single default-mode response (no control-token suffix appended).}
    \label{fig:exp_composite_mmlu_full}
\end{figure}

\subsection{Model-Based Judge Validation of the \#Refl./Ans.\ Metric}
\label{sec:appendix_judge_validation}

To confirm that \#Refl./Ans.---a keyword-count metric---captures genuine implicit reasoning rather than surface token matches, we apply an external LLM judge (GPT-OSS-20B) to off-the-shelf hybrid-thinking baselines (\emph{not} \ple{}, to avoid circularity) across three model sizes (Qwen3-4B, Qwen3-8B, Qwen3-14B) and three benchmarks. The judge classifies each response three-way (no / implicit / explicit reasoning). \Cref{tab:appendix_judge_validation} reports agreement between the judge's verdict and \#Refl./Ans.: \nothink{} outputs flagged as leaking by \#Refl./Ans.\ are independently judged as containing implicit reasoning in 92--100\% of cases, and \think{} outputs with high \#Refl./Ans.\ are judged as explicit reasoning in 94--100\% of cases. \Cref{tab:appendix_judge_negative_control} further reports a true-negative control: a genuinely non-hybrid model (Qwen2.5-7B-Instruct) shows \#Refl.\ = 0 on every benchmark, and the judge independently and consistently returns ``no reasoning,'' confirming the metric does not fire on outputs that are already direct. Because this validation uses independent off-the-shelf baselines rather than \ple{} itself, it establishes that \#Refl./Ans.\ is a valid leakage signal in general, which then applies to our own results without circularity. 

\begin{table}[h!]
\centering
\caption{Model-based judge validation of \#Refl./Ans.\ as a genuine (not surface-level) leakage signal, on off-the-shelf Qwen3 hybrid baselines. Judge: GPT-OSS-20B, three-way no/implicit/explicit verdict.}
\label{tab:appendix_judge_validation}
\resizebox{\textwidth}{!}{
\begin{tabular}{llcccc}
\toprule
Benchmark & Model & Mode & Output Len. & \#Refl. & Judge verdict (agreement) \\
\midrule
\multirow{6}{*}{MATH500}
 & Qwen3-4B  & No-think & 969  & 210    & Implicit reasoning (100\%) \\
 & Qwen3-4B  & Think    & 4750 & 163160 & Explicit reasoning (98\%)  \\
 & Qwen3-8B  & No-think & 1044 & 113    & Implicit reasoning (92\%)  \\
 & Qwen3-8B  & Think    & 4419 & 154670 & Explicit reasoning (98\%)  \\
 & Qwen3-14B & No-think & 885  & 131    & Implicit reasoning (98\%)  \\
 & Qwen3-14B & Think    & 4468 & 127342 & Explicit reasoning (98\%)  \\
\midrule
\multirow{6}{*}{AIME24}
 & Qwen3-4B  & No-think & 5022  & 672   & Implicit reasoning (94\%)  \\
 & Qwen3-4B  & Think    & 11410 & 21481 & Explicit reasoning (94\%)  \\
 & Qwen3-8B  & No-think & 4677  & 24    & Implicit reasoning (98\%)  \\
 & Qwen3-8B  & Think    & 11578 & 23222 & Explicit reasoning (96\%)  \\
 & Qwen3-14B & No-think & 4013  & 74    & Implicit reasoning (98\%)  \\
 & Qwen3-14B & Think    & 11341 & 18814 & Explicit reasoning (98\%)  \\
\midrule
\multirow{6}{*}{GPQA-Diamond}
 & Qwen3-4B  & No-think & 1504 & 150    & Implicit reasoning (100\%) \\
 & Qwen3-4B  & Think    & 7615 & 132680 & Explicit reasoning (100\%) \\
 & Qwen3-8B  & No-think & 1245 & 196    & Implicit reasoning (96\%)  \\
 & Qwen3-8B  & Think    & 7481 & 131701 & Explicit reasoning (98\%)  \\
 & Qwen3-14B & No-think & 1294 & 181    & Implicit reasoning (98\%)  \\
 & Qwen3-14B & Think    & 7246 & 105493 & Explicit reasoning (96\%)  \\
\bottomrule
\end{tabular}
}
\end{table}

\begin{table}[h!]
\centering
\caption{True-negative control: a non-hybrid Instruct model shows zero reflective tokens and is consistently judged ``no reasoning,'' confirming the metric does not fire on genuinely direct outputs. Compare to the hybrid baseline (Qwen3-8B), which shows non-zero \#Refl.\ and is judged implicit reasoning on all three benchmarks.}
\label{tab:appendix_judge_negative_control}
\begin{tabular}{llccc}
\toprule
Benchmark & Model & Output Len. & \#Refl. & Judge verdict \\
\midrule
MATH500 & Qwen2.5-7B-Instruct & 703  & \textbf{0} & \textbf{No reasoning} \\
MATH500 & Qwen3-8B (hybrid)   & 1044 & 113        & Implicit reasoning \\
AIME24  & Qwen2.5-7B-Instruct & 1729 & \textbf{0} & \textbf{No reasoning} \\
AIME24  & Qwen3-8B (hybrid)   & 4677 & 24         & Implicit reasoning \\
GPQA-Diamond & Qwen2.5-7B-Instruct & 775 & \textbf{0} & \textbf{No reasoning} \\
GPQA-Diamond & Qwen3-8B (hybrid)   & 1245 & 196       & Implicit reasoning \\
\bottomrule
\end{tabular}
\end{table}

\subsection{Extended Related-Work Discussion: Positioning Against Learned-Router and Parameter-Adaptation Methods}
\label{sec:appendix_related_work_extended}

\ple{} is one instance of a broader family of \emph{mode-specific parameter separation} designs, which also includes adapter/LoRA-based approaches and learned or soft-routing Mixture-of-Experts (MoE) methods. We expand here on how \ple{} differs from each.

\textbf{Learned and soft-routing MoE.} Token-level MoE architectures~\citep{shazeer2017outrageously, fedus2022switch, lepikhin2021gshard, jiang2024mixtral, dai2024deepseekmoe} learn a gating network that selects experts per token, typically requiring auxiliary load-balancing losses and incurring routing overhead at every layer. Metis-HOME~\citep{lan2025metishome} is the closest such system to our setting: it applies a two-expert MoE with a \emph{learned} router to hybrid thinking, trained via a multi-stage RL+SFT pipeline. \ple{} differs on two axes simultaneously: (1) routing mechanism---\ple{} uses a deterministic, parameter-free control-token lookup resolved once per sequence, with no learnable gate, no routing supervision, and no balancing loss, whereas Metis-HOME's router is trained; and (2) modality and scope---Metis-HOME operates on cross-modal vision-language inputs, while \ple{} is text-only. These differences make a direct head-to-head comparison uninformative: substituting a learned router into our architecture would answer a different question (\emph{which router design is best?}) than the one this paper asks (\emph{is deterministic, parameter-free routing sufficient to achieve clean mode separation?}). We therefore position learned-router MoE methods, including Metis-HOME, as related work rather than as baselines.

\textbf{Adapters and LoRA-style methods.} An alternative approach to mode-specific behavior is to keep a single shared backbone and attach lightweight, mode-conditioned adapter or LoRA modules that are switched in or out per mode. Such methods trade parameter cost for a different failure mode: because the adapted capacity is small relative to the backbone, they may be less able to fully decouple two competing generation objectives (extended deliberation vs.\ concise direct answering) from the shared feed-forward pathway, which is precisely the interference \ple{} targets by giving each mode a full, independent MLP. We view adapter-based mode separation as a complementary, more parameter-efficient point in the same design space, and a natural direction for follow-up work rather than a claim this paper makes.

\textbf{General multi-task parameter separation.} Beyond hybrid thinking specifically, multi-task learning more broadly has explored task-specific sub-networks, mixture-of-adapters, and conditional computation to reduce cross-task interference. \ple{}'s contribution within this space is narrow and specific: a minimal, two-expert instantiation targeted at the think/no-think mode boundary, using the control token already present in the hybrid-thinking interface as a free, deterministic routing signal---requiring no additional supervision beyond what standard hybrid-thinking SFT already provides.

\Cref{tab:moe_comparison} summarizes these design-space differences along structural properties only; it makes no performance claims and reports no numbers, since we did not run these methods as baselines (see main text above for why).

\begin{table}[h!]
\centering
\caption{Structural comparison of mode-specific parameter-separation designs. Properties only---no performance numbers or claims of superiority, since these methods were not run as baselines in this paper (see \Cref{sec:related_work}).}
\label{tab:moe_comparison}
\footnotesize
\setlength{\tabcolsep}{4pt}
\renewcommand{\arraystretch}{1.15}

\textit{(a) Routing}\\[2pt]
\begin{tabular}{@{}>{\raggedright\arraybackslash}p{0.235\linewidth}>{\raggedright\arraybackslash}p{0.215\linewidth}>{\raggedright\arraybackslash}p{0.255\linewidth}>{\raggedright\arraybackslash}p{0.235\linewidth}@{}}
\toprule
Method & Routing signal & Learnable routing params & Granularity \\
\midrule
\textbf{\ple{} (ours)} & User-specified control token & None & Sequence-level (one decision per sequence) \\
Learned / soft-routing MoE & Learned gate (trained) & Yes & Token-level (per-token top-$k$) \\
Adapters / LoRA-style & Mode flag selects adapter & None for selection; adapter weights are trained & Sequence- or module-level \\
General multi-task separation & Task/mode identifier & Varies by method & Typically task-level \\
Metis-HOME~\citep{lan2025metishome} & Learned query/token-level gate & Yes & Token/query-level \\
\bottomrule
\end{tabular}

\vspace{8pt}
\textit{(b) Architecture and cost}\\[2pt]
\begin{tabular}{@{}>{\raggedright\arraybackslash}p{0.235\linewidth}>{\raggedright\arraybackslash}p{0.225\linewidth}>{\raggedright\arraybackslash}p{\dimexpr0.480\linewidth+8pt\relax}@{}}
\toprule
Method & Aux.\ balancing loss & What is duplicated \\
\midrule
\textbf{\ple{} (ours)} & None & Full MLP per mode \\
Learned / soft-routing MoE & Required & Full MLP per expert \\
Adapters / LoRA-style & None & Low-rank adapter, not full MLP \\
General multi-task separation & Varies by method & Task-specific sub-network (size varies) \\
Metis-HOME~\citep{lan2025metishome} & Required (MoE-style) & Full expert MLP (2 experts: think/non-think) \\
\bottomrule
\end{tabular}
\end{table}

\subsection{Routing Mechanism: Formal Description, Validation, and Worked Example}
\label{sec:appendix_routing_pseudocode}

Each input sequence is assigned a single routing index $r \in \{0, 1\}$ ($r{=}0$: \nothink{} $\to$ expert 0; $r{=}1$: \think{} $\to$ expert 1). This index is shared across all tokens and all layers for that sequence; there is no learnable gate or router parameter anywhere in the model.

\paragraph{Primary routing: registered control tokens.}
In the primary setup, \texttt{/think} and \texttt{/no\_think} are registered as dedicated special tokens, and routing is detected by an anchor-bounded, last-occurrence-wins scan over the prompt region. The detector locates the last \texttt{<|im\_start|>assistant} boundary token pair and searches only before it, structurally excluding the assistant's own response from influencing routing. Within the prompt, the last control token found is taken as the sequence's routing intent, which naturally handles multi-turn inputs containing multiple control words. If no control token is found, a safe-mode default applies: strict mode (training and strict evaluation) raises an error, while non-strict mode falls back to \texttt{config.default\_routing\_index}. The index is computed once at the start of \texttt{model.forward} and propagated to every layer; during generation, it is detected at prefill and cached for all subsequent decode steps, keeping prefill and decode routing consistent.

\paragraph{Fallback routing: unregistered control tokens.}
For backbones where the control words cannot be registered as atomic tokens, the same detector operates at the decoded-string level with guarded matching, so a single mechanism covers both regimes. Some tokenizers cannot register \texttt{/think} and \texttt{/no\_think} as atomic tokens; they split into subwords that can collide with ordinary text (e.g., the substring \texttt{\_th} inside \texttt{/think} colliding with a word like \texttt{R\_th}). The fallback detector operates on the \texttt{tokenizer.decode} string with a guarded regular expression: a negative lookbehind excludes the \texttt{</think>} closing tag, a negative lookahead blocks longer substrings such as \texttt{/think\_hard}, and the alternation order places \texttt{/no\_think} before \texttt{/think} to avoid a prefix-overlap match. Because \texttt{tokenizer.decode} maps both the registered-token and the multi-subword case to the same surface string, one detector implementation covers both regimes with no code branching.

\paragraph{Edge cases.} The detector explicitly handles: (i) no control token present in the prompt (safe-mode default, see above); (ii) multiple control tokens in a multi-turn prompt (last occurrence wins); (iii) control-token-like substrings appearing inside the model's own generated response (structurally excluded, since the scan region is bounded to end at the prompt/response anchor); and (iv) tokenizers that cannot register the control words as atomic tokens (string-level fallback with guarded regex, above).

\begin{table}[h!]
\centering
\caption{Routing detection robustness, validated on the full training set and a unit-test suite.}
\label{tab:appendix_routing_b1}
\begin{tabular}{@{}p{0.66\linewidth}c@{}}
\toprule
Property & Result \\
\midrule
Anchor (prompt/response boundary) found --- full training set & 37{,}500 / 37{,}500 (100\%) \\
Detection accuracy (decoded-string + guarded regex) & 37{,}500 / 37{,}500 (100\%) \\
Adversarial injection false-positives & 0 / 37{,}500 \\
Routing \& dispatch unit tests (detection / expert-selection correctness / backward-compat / batching) & 42 / 42 pass \\
\bottomrule
\end{tabular}
\end{table}

\paragraph{Worked example.} \Cref{tab:appendix_routing_trace} traces the routing decision end-to-end for a representative prompt ending in \texttt{/think} (``\dots leap year? \texttt{/think}''). The detector locates the last \texttt{<|im\_start|>assistant} anchor, scans only the prompt region to its left, finds \texttt{/think} as the last control-token match, and sets \texttt{routing\_index=1}, activating expert 1 (\think{}) for every layer and decoding step. Swapping only the final control token to \texttt{/no\_think} changes exactly the last three rows of the trace---the matched token, the routing index, and the activated expert---while every other step of the pipeline (attention, embeddings, normalization layers, the LM head) is identical.

\begin{table}[h!]
\centering
\caption{Typical-case token trace for a representative \texttt{/think}-routed prompt.}
\label{tab:appendix_routing_trace}
\begin{tabular}{ll}
\toprule
Stage & Content \\
\midrule
Prompt tail & \texttt{... leap year? /think <|im\_end|>} \\
Anchor (boundary) & \texttt{<|im\_start|>assistant} \\
Scan region & Everything left of the anchor (the prompt) \\
Decode $\to$ regex & Last match = \texttt{/think} \\
Decision & \texttt{routing\_index=1} $\to$ expert 1 (think), all layers, all steps \\
Output shape & Non-empty \texttt{<think>...</think>} + full answer \\
\bottomrule
\end{tabular}
\end{table}

\paragraph{Activated vs.\ total parameters.} \Cref{tab:appendix_routing_params} summarizes the parameter accounting underlying the efficiency claim in \Cref{sec:experiments}. The routing index selects one expert, and that expert's MLP is called directly for the whole sequence; there is no per-token gather/scatter and no dispatch machinery. One MLP is evaluated per sequence, so PLE's per-sequence activated parameters and MLP FLOPs equal a single dense model's by construction; the only overhead relative to a dense model is the storage of one extra expert MLP. 

\begin{table}[h!]
\centering
\caption{Total vs. activated parameters. PLE shares attention, embeddings, norms, and the LM head across both experts; only one additional expert MLP is stored, and sequence-level routing activates a single expert at inference.}
\label{tab:appendix_routing_params}
\begin{tabular}{lcc}
\toprule
 & Total params & Activated params / sequence \\
\midrule
Dense backbone & $P$ & $P$ \\
Two independent models & $2P$ & $P$ (per model) \\
\textbf{PLE} & $P + \Delta_{\text{MLP}}$ ($\approx 1.67P$) & $\mathbf{P}$ \\
\bottomrule
\end{tabular}
\end{table}

\subsection{Supplemental Results for Ablation Studies}
\label{sec:appendix_abl_results}
\subsubsection{Supplemental Results for Backbone Weight Ablation}
\label{sec:appendix_abl_results_base}

\Cref{tab:appendix_ablation_base_superior_mmlu_gpqa,tab:appendix_ablation_dataset_mmlu_gpqa} extend the ablations beyond the mathematical benchmarks and clarify the scope of our claims.

\paragraph{MMLU-STEM favors concise, nearly artifact-free direct answering.}
Across several initializations, \nothink{} accuracy on MMLU-STEM stays close to \think{} accuracy (typically within a few points, and exceeding it for some initializations) while remaining short and consistently near-artifact-free---reflective-token counts stay near zero in \nothink{} mode across every model we evaluate, including the off-the-shelf baseline. This may simply reflect that many MMLU-STEM questions can be answered correctly without extended reasoning. These results reinforce the practical motivation for hybrid thinking: a well-behaved \nothink{} mode should not be interpreted as a fallback setting.

\paragraph{GPQA-Diamond is more backbone-dependent.}
The GPQA results are more mixed. \nothink{} leakage remains low, but accuracy depends strongly on the initialization and training dataset. We view this as an important boundary of our claim. \ple{} addresses \emph{mode interference}; it does not by itself supply missing scientific knowledge or guarantee uniformly strong direct answering on every benchmark. The most stable conclusion on GPQA is therefore about control rather than universal accuracy gains.

\paragraph{Why these supplementary results matter.}
Taken together with the main math results, the appendix suggests a simple task-dependent picture. Architectural separation is most valuable when a model must clearly choose between ``reason deeply'' and ``answer directly.'' On benchmarks that mainly test retrieval or short reasoning, the main benefit of \ple{} is a clean and efficient \nothink{} mode; on long-horizon mathematical benchmarks, it can also substantially improve \nothink{} accuracy.

\subsection{Behavioral Probe of the Instruct Backbone}
\label{sec:appendix_instruct_probe}

\textit{Qwen3-4B-Instruct} exposes a single response mode, with no control token to select between
reasoning and direct answering. To characterise how that mode actually behaves, we evaluate the
off-the-shelf checkpoint twice: once on the bare prompt, and once with \texttt{/no\_think} appended.

\begin{table}[h!]
\centering
\caption{Behavioral probe of the off-the-shelf \textit{Qwen3-4B-Instruct} backbone. ``Plain'' is the bare
prompt; ``\texttt{/no\_think}'' appends the control token used by hybrid-thinking models.}
\vspace{-8pt}
\label{tab:appendix_instruct_probe}
\begin{tabular}{llccc}
\toprule
Benchmark & Prompt & Acc.\ (\%) & Avg.\ Len. & \#Refl./Ans. \\
\midrule
\multirow{2}{*}{AIME24}       & Plain              & 63.33 & 9932 & 2.06 \\
                              & \texttt{/no\_think} & 56.67 & 7950 & 1.81 \\
\midrule
\multirow{2}{*}{MATH500}      & Plain              & 94.52 & 1813 & 0.36 \\
                              & \texttt{/no\_think} & 94.36 & 1675 & 0.30 \\
\midrule
\multirow{2}{*}{GPQA-Diamond} & Plain              & 42.32 & 445  & 0.14 \\
                              & \texttt{/no\_think} & 42.02 & 433  & 0.10 \\
\midrule
\multirow{2}{*}{MMLU-STEM}    & Plain              & 87.54 & 296  & 0.05 \\
                              & \texttt{/no\_think} & 87.58 & 283  & 0.04 \\
\bottomrule
\end{tabular}
\end{table}

Two observations follow. First, the backbone reasons at length by default: on AIME24 it emits over
$9{,}000$ tokens carrying two reflective tokens per answer, so its single mode is a reasoning mode
rather than a direct-answer mode. Second, appending \texttt{/no\_think} does not give it a direct-answer
mode, however slightly but consistently reduces both its performance and reasoning leakage: it emits slightly less tokens and reflective tokens per answer than default mode on all 4 benchmarks. While on other three benchmarks its accuracy moves by at most $0.3$ points, on AIME24, the accuracy drops by $6.66\%$,  Both observations are
consistent with its lineage as a non-thinking release derived from a hybrid-thinking
base~\citep{yang2025qwen3,qwen2507card2026}, and they motivate supplying the direct-answer path
architecturally rather than through prompting.

\subsection{Two-Separate-Models Controlled Ablation}
\label{sec:appendix_specialists}
All models below are trained from an identical Qwen3-4B-Base checkpoint on matched data, isolating the effect of deterministic routing from any difference in base weights or data. ``Specialist'' models are trained on only one mode's data; ``weight-merge'' naively averages the two specialists' weights with no routing mechanism.

\begin{table}[h!]
\centering
\caption{\textit{Base-PLE} (\ple{} initialized from Qwen3-4B-Base) vs.\ single-mode specialists and a naive weight-merge, all trained from an identical Qwen3-4B-Base checkpoint on matched data. Accuracy (\%) shown; per-benchmark tables with output length and \#Refl./Ans.\ follow below.}
\label{tab:specialists_summary}
\begin{tabular}{lccc}
\toprule
Model & MATH500 & AIME24 & GPQA-Diamond \\
\midrule
Think-only specialist (think mode)   & 80.50 & 22.00 & 30.30 \\
No\_think-only specialist (no\_think mode) & 64.30 & 4.70 & 31.00 \\
Weight-merge of the two specialists   & 66.50 & 9.30 & 32.90 \\
\midrule
\textbf{\textit{Base-PLE}, think}         & \textbf{82.40} & 21.33 & \textbf{34.00} \\
\textbf{\textit{Base-PLE}, no\_think}       & \textbf{64.50} & 3.33 & \textbf{34.00} \\
\bottomrule
\end{tabular}
\end{table}

\begin{table}[h!]
\centering
\caption{AIME24 --- \ple{} vs.\ single-mode specialists and weight-merge.}
\label{tab:appendix_specialists_aime24}
\begin{tabular}{lccc}
\toprule
Model & Acc.\ (\%) & Avg.\ Len. & \#Refl./Ans. \\
\midrule
Qwen3-4B-Base + think-only data    & 22.00 & 28895 & 13.53 \\
Qwen3-4B-Base + no\_think-only data & 4.70  & 941   & 0.07  \\
Weight-merge of the two specialists & 9.30  & 4062  & 3.76  \\
\midrule
\textbf{\textit{Base-PLE}, think}    & \textbf{21.33} & 27698 & 12.37 \\
\textbf{\textit{Base-PLE}, no\_think} & \textbf{3.33}  & 1244  & 0.09  \\
\bottomrule
\end{tabular}
\end{table}

\begin{table}[h!]
\centering
\caption{MATH500 --- \ple{} vs.\ single-mode specialists and weight-merge.}
\label{tab:appendix_specialists_math500}
\begin{tabular}{lccc}
\toprule
Model & Acc.\ (\%) & Avg.\ Len. & \#Refl./Ans. \\
\midrule
Qwen3-4B-Base + think-only data    & 80.50 & 10390 & 4.73 \\
Qwen3-4B-Base + no\_think-only data & 64.30 & 358   & 0.02 \\
Weight-merge of the two specialists & 66.50 & 1044  & 2.12 \\
\midrule
\textbf{\textit{Base-PLE}, think}    & \textbf{82.40} & 8514 & 4.64 \\
\textbf{\textit{Base-PLE}, no\_think} & \textbf{64.50} & 368  & 0.02 \\
\bottomrule
\end{tabular}
\end{table}

\begin{table}[h!]
\centering
\caption{GPQA-Diamond --- \ple{} vs.\ single-mode specialists and weight-merge.}
\label{tab:appendix_specialists_gpqa}
\begin{tabular}{lccc}
\toprule
Model & Acc.\ (\%) & Avg.\ Len. & \#Refl./Ans. \\
\midrule
Qwen3-4B-Base + think-only data    & 30.30 & 19742 & 24.86 \\
Qwen3-4B-Base + no\_think-only data & 31.00 & 457   & 0.21  \\
Weight-merge of the two specialists & 32.90 & 1152  & 2.95  \\
\midrule
\textbf{\textit{Base-PLE}, think}    & \textbf{34.00} & 18433 & 37.33 \\
\textbf{\textit{Base-PLE}, no\_think} & \textbf{34.00} & 397   & 0.02  \\
\bottomrule
\end{tabular}
\end{table}

The initialization-equivalence test (12/12 prompt--mode pairs byte-identical; 108/108 parameter tensors numerically equal at init) and the parameter accounting ($P + \Delta_{\text{MLP}} \approx 1.67P$ for \ple{} vs.\ $2P$ for two independent models, a $\approx 0.33P$ saving) are reported in Appendix~\ref{sec:appendix_routing_pseudocode} (\Cref{tab:appendix_routing_params}); we do not repeat the derivation here to avoid duplication.

\subsection{Generalization: Llama-3.1-8B and Full MMLU}
\label{sec:appendix_generalization}
To test generalization beyond the Qwen family and beyond math/STEM tasks, we apply \ple{} to \textit{Llama-3.1-8B} (trained on OpenR1 data at the 20k scale, matching the data scale of the SFT-only comparison recipe) and evaluate on full MMLU (57 subjects, 14{,}042 questions) with the Qwen3-4B backbone used throughout the main paper. In \Cref{tab:appendix_llama_math500,tab:appendix_llama_aime24,tab:appendix_llama_gpqa,tab:appendix_llama_mmlustem}, the ``Instruct'' row is the off-the-shelf Llama-3.1-8B-Instruct model (no-think only, no hybrid capability); ``+SFT only'' is a dense two-phase SFT hybrid recipe following \citet{wang2025demystifying} at the matched 20k data scale; ``PLE'' is our architecture applied to the same backbone and data.

\begin{table}[h!]
\centering
\caption{Cross-family generalization (Llama-3.1-8B): MATH500.}
\label{tab:appendix_llama_math500}
\begin{tabular}{lcc}
\toprule
Model & Mode & Acc.\ (\%) \\
\midrule
Llama-3.1-8B-Instruct          & No-think & 49.3 \\
Llama-3.1-8B-Instruct + SFT only & No-think & 47.2 \\
Llama-3.1-8B-Instruct + SFT only & Think    & 68.6 \\
\textbf{Llama-3.1-8B-PLE}       & No-think & \textbf{56.0} \\
\textbf{Llama-3.1-8B-PLE}       & Think    & 66.4 \\
\bottomrule
\end{tabular}
\end{table}

\begin{table}[h!]
\centering
\caption{Cross-family generalization (Llama-3.1-8B): AIME24.}
\label{tab:appendix_llama_aime24}
\begin{tabular}{lcc}
\toprule
Model & Mode & Acc.\ (\%) \\
\midrule
Llama-3.1-8B-Instruct          & No-think & 6.0 \\
Llama-3.1-8B-Instruct + SFT only & No-think & 4.3 \\
Llama-3.1-8B-Instruct + SFT only & Think    & 7.0 \\
\textbf{Llama-3.1-8B-PLE}       & No-think & \textbf{12.0} \\
\textbf{Llama-3.1-8B-PLE}       & Think    & \textbf{18.0} \\
\bottomrule
\end{tabular}
\end{table}

\begin{table}[h!]
\centering
\caption{Cross-family generalization (Llama-3.1-8B): GPQA-Diamond.}
\label{tab:appendix_llama_gpqa}
\begin{tabular}{lcc}
\toprule
Model & Mode & Acc.\ (\%) \\
\midrule
Llama-3.1-8B-Instruct          & No-think & 30.5 \\
Llama-3.1-8B-Instruct + SFT only & No-think & 26.7 \\
Llama-3.1-8B-Instruct + SFT only & Think    & 36.3 \\
\textbf{Llama-3.1-8B-PLE}       & No-think & \textbf{32.0} \\
\textbf{Llama-3.1-8B-PLE}       & Think    & \textbf{38.0} \\
\bottomrule
\end{tabular}
\end{table}

\begin{table}[h!]
\centering
\caption{Cross-family generalization (Llama-3.1-8B): MMLU-STEM.}
\label{tab:appendix_llama_mmlustem}
\begin{tabular}{lcc}
\toprule
Model & Mode & Acc.\ (\%) \\
\midrule
Llama-3.1-8B-Instruct          & No-think & 53.0 \\
Llama-3.1-8B-Instruct + SFT only & No-think & 53.4 \\
Llama-3.1-8B-Instruct + SFT only & Think    & 77.3 \\
\textbf{Llama-3.1-8B-PLE}       & No-think & \textbf{76.7} \\
\textbf{Llama-3.1-8B-PLE}       & Think    & \textbf{78.9} \\
\bottomrule
\end{tabular}
\end{table}

\begin{table}[h!]
\centering
\caption{Full MMLU (57 subjects, 14{,}042 questions), Qwen3-4B backbone; the same results are shown graphically in \Cref{fig:exp_composite_mmlu_full}. \textdagger: PLE think accuracy is over the full question set; average length and \#Refl./response for PLE think are over a 90.6\% subset (56/57 subjects; \texttt{professional\_law} partially truncated).}
\label{tab:appendix_mmlu_full}
\begin{tabular}{lcccc}
\toprule
Model & Mode & Acc.\ (\%) & Avg.\ Len. & \#Refl./response \\
\midrule
No\_think-only SFT (single mode) & No-think & 46.2 & 268  & 0.001 \\
Think-only SFT (single mode)     & Think    & 75.2 & 3723 & 36.804 \\
Qwen3-4B (off-the-shelf hybrid)  & No-think & 75.1 & 583  & 0.003 \\
Qwen3-4B (off-the-shelf hybrid)  & Think    & 79.8 & 1916 & 13.500 \\
SFT-recipe~\citep{wang2025demystifying}         & No-think & 57.1 & 1548 & 0.646 \\
SFT-recipe~\citep{wang2025demystifying}         & Think    & 76.9 & 2823 & 19.548 \\
\midrule
\textbf{\ple{} (ours)} & No-think & \textbf{73.4} & 164 & 0.003 \\
\textbf{\ple{} (ours)} & Think    & 77.3\textdagger & 2349\textdagger & 4.803\textdagger \\
\bottomrule
\end{tabular}
\end{table}

\subsection{Ruling Out Data-Construction Bias}
\label{sec:appendix_data_bias}
Because \ple{} and its dense SFT baseline (the ``SFT-only'' row in \Cref{tab:exp_leakage}) are trained on the identical Qwen3-235B-distilled dataset---same teacher, same filtering pipeline, same examples---any teacher-style bias in the \nothink{} targets would inflate both equally, and cannot selectively favor \ple{}. As a direct test, we further retrain \ple{} on a newer, more aggressively \emph{condensed} \nothink{} dataset: a stronger model rewrites each full chain-of-thought into a concise, reasoning-free answer, an even more aggressive direct-answer construction than the original filtering pipeline. If teacher-style distillation were inflating \nothink{} performance, this more aggressively condensed data should help further; instead, \Cref{tab:appendix_data_bias} shows it makes \nothink{} accuracy substantially \emph{worse} (AIME24: 40.0\%~$\to$~3.3\%; MATH500: 80.8\%~$\to$~64.3\%), with \think{} accuracy also somewhat lower. This is the opposite of what a teacher-bias explanation predicts, and is consistent with \ple{}'s benefit being architectural rather than a data-construction artifact. We note that identifying the single best \nothink{}-construction recipe is outside this paper's scope; the architecture is a separable, independently improvable component.

\begin{table}[h!]
\centering
\caption{Data-construction-bias ablation: \ple{} retrained on the original vs.\ a more aggressively condensed \nothink{} dataset (Qwen3-4B-Base). Condensation worsens \nothink{} accuracy, ruling out teacher-style bias inflation as an explanation for \ple{}'s gains.}
\label{tab:appendix_data_bias}
\begin{tabular}{llcc}
\toprule
Benchmark & Training data & Mode & Acc.\ (\%) \\
\midrule
\multirow{4}{*}{AIME24}
 & Original data (as in the paper) & Think    & 60.0 \\
 & Original data (as in the paper) & No-think & 40.0 \\
 & Condensed \nothink{} data (new)  & Think    & 53.3 \\
 & Condensed \nothink{} data (new)  & No-think & \textbf{3.3} \\
\midrule
\multirow{4}{*}{MATH500}
 & Original data (as in the paper) & Think    & 94.8 \\
 & Original data (as in the paper) & No-think & 80.8 \\
 & Condensed \nothink{} data (new)  & Think    & 91.0 \\
 & Condensed \nothink{} data (new)  & No-think & \textbf{64.3} \\
\bottomrule
\end{tabular}
\end{table}

\begin{figure}[h!]
\centering
\includegraphics[width=\linewidth]{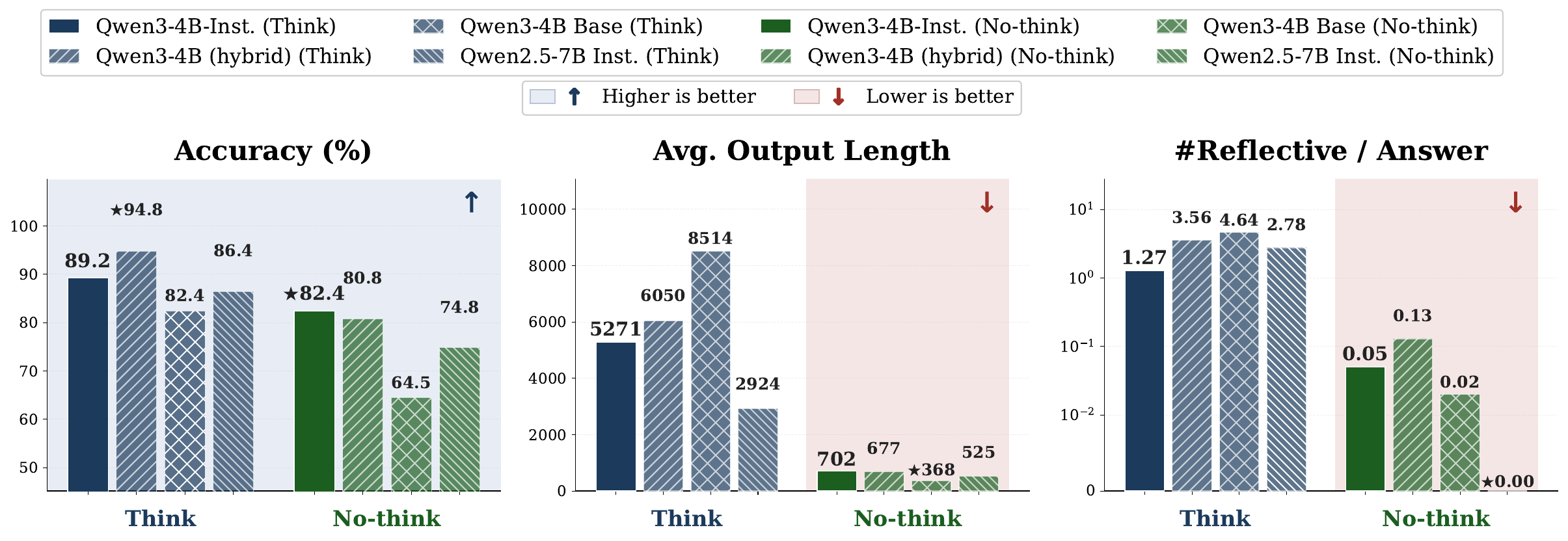}
\caption{Ablation on backbone choice (MATH500); same models as \Cref{fig:ablation_base_aime24}. PLE is initialized from ``Qwen2.5-7B-Instruct'' (pure instruction-tuned weights), ``Qwen/Qwen3-4B-Instruct-2507'', ``Qwen3-4B Base'' (raw pretrained weights), or ``Qwen3-4B (hybrid)'' (hybrid-thinking weights). Residual \nothink{} leakage again orders with each backbone's own reasoning behavior (0.00 for Qwen2.5-7B-Instruct\ $<$ 0.05 for Qwen/Qwen3-4B-Instruct-2507\ $<$ 0.13 for the hybrid variant). Unlike AIME24, the pretrained base stays usable on this easier benchmark, though at the lowest \nothink{} accuracy.}
\label{fig:ablation_base_math500}
\end{figure}

\begin{table*}[h!]
    \centering
    \caption{Backbone-weight ablation under \textbf{superior-reasoning} training data on MMLU-STEM and GPQA-Diamond. We report accuracy (Acc., \%), output length (Len.), and reflective tokens per answer (\#Refl./Ans.).}
    \vspace{-8pt}
    \label{tab:appendix_ablation_base_superior_mmlu_gpqa}
    \setlength{\tabcolsep}{5pt}
    \begin{tabular}{cccccccc}
    \toprule
    \multirow{2}{*}{Model} & \multirow{2}{*}{Mode}
    & \multicolumn{3}{c}{MMLU-STEM} & \multicolumn{3}{c}{GPQA-Diamond} \\
    \cmidrule(lr){3-5} \cmidrule(lr){6-8}
    & & Acc. & Len. & \#Refl./Ans. & Acc. & Len. & \#Refl./Ans. \\
    \midrule
    \multirow{2}{*}{\shortstack{Qwen3-4B\\Base}}
     & Think    & 94.44 & \-- & \-- & 30.00 & \-- & \-- \\
     & No-think & 88.89 & \-- & \-- & 44.00 & \-- & \-- \\
    \midrule
    \multirow{2}{*}{Qwen3-4B}
     & Think    & 91.11 & 794 & 6.36 & 42.00 & 8268 & 3.26 \\
     & No-think & 93.33 & 117 & 0.06 & 26.00 & 298  & 0.00 \\
    \midrule
    \multirow{2}{*}{\shortstack{Qwen3-4B\\Instruct}}
     & Think    & 91.06 & 1606.65 & 0.49 & 49.49 & 15013.74 & 40.43 \\
     & No-think & 86.61 & 180.19  & \textbf{0.02} & 45.86 & 736.34   & \textbf{0.23} \\
    \midrule
    \multirow{2}{*}{\shortstack{Qwen2.5-7B\\Instruct}}
     & Think    & 88.89 & 429 & 8.16 & 34.00 & 7044 & 4.44 \\
     & No-think & 92.22 & 139 & 0.02 & 30.00 & 3857 & 0.00 \\
    \bottomrule
    \end{tabular}
\end{table*}

\subsection{Supplemental Results for Dataset Ablation}
\label{sec:appendix_abl_results_dataset}
\begin{figure}[h!]
    \centering
    \includegraphics[width=1.0\linewidth]{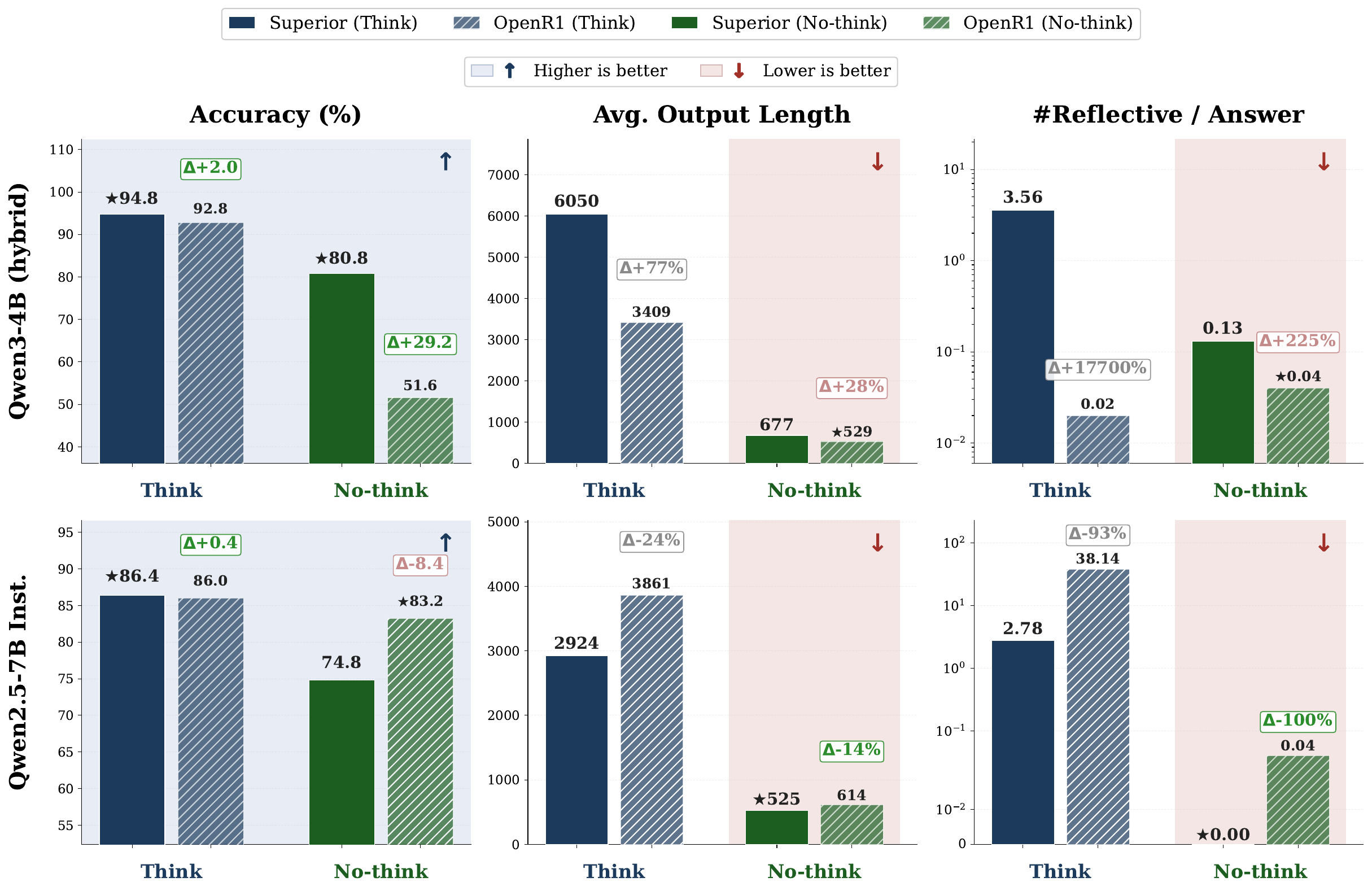}
    \caption{Ablation on training dataset (MATH500) with the native hybrid-thinking \textit{Qwen3-4B} backbone (top) and \textit{Qwen2.5-7B-Instruct} (bottom). ``Superior'': PLE trained on our high-difficulty dataset with longer CoT traces; ``OpenR1'': trained on simpler data. On the stronger Qwen3-4B backbone Superior wins in both modes, while on Qwen2.5-7B OpenR1 remains competitive---the same capacity-dependent pattern seen on AIME24 (\Cref{fig:ablation_dataset_aime24}).}
    \label{fig:ablation_dataset_math500}
\end{figure}

\begin{table*}[h!]
    \centering
    \caption{Dataset ablation across different backbone weights.
    Leftmost column groups results by dataset (\textbf{OpenR1} vs. \textbf{superior-reasoning 54k}); within each group, we report think/no-think performance for each backbone on MATH500 and AIME24. The \textit{Qwen3-4B} rows use the native hybrid-thinking checkpoint.}
    \vspace{-8pt}
    \label{tab:ablation_dataset_by_base_weight}
    \setlength{\tabcolsep}{4pt}
    \resizebox{\textwidth}{!}{
   \begin{tabular}{ccccccccc}
    \toprule
    \multirow{2}{*}{Dataset} & \multirow{2}{*}{Backbone} & \multirow{2}{*}{Mode}
    & \multicolumn{3}{c}{MATH500} & \multicolumn{3}{c}{AIME24} \\
    \cmidrule(lr){4-6} \cmidrule(lr){7-9}
    & & & Acc. & Len. & \#Refl./Ans. & Acc. & Len. & \#Refl./Ans. \\
    \midrule
    \multirow{6}{*}{OpenR1}
    & \multirow{2}{*}{\shortstack{Qwen3-4B\\Base}}
      & Think    & 80.00 & 16384 & 28.80 & 50.00 & 16384 & 19.40 \\
    & & No-think & 56.00 & 16262 & 0.00  & 10.00 & 16384 & 0.00 \\
    \cmidrule(lr){2-9}
    & \multirow{2}{*}{\shortstack{Qwen3-4B\\(hybrid)}}
      & Think    & 92.80 & 3409  & 0.02  & 52.00 & 9775  & 25.98 \\
    & & No-think & 51.60 & 529   & 0.04  & 16.00 & 906   & 0.00 \\
    \cmidrule(lr){2-9}
    & \multirow{2}{*}{\shortstack{Qwen2.5-7B\\Instruct}}
      & Think    & 86.00 & 3861  & 38.14 & 36.00 & 11794 & 62.76 \\
    & & No-think & 83.20 & 614   & 0.04  & 24.00 & 1329  & 0.02 \\
    \midrule
    \multirow{6}{*}{\shortstack{Superior-\\reasoning\\54k}}
    & \multirow{2}{*}{\shortstack{Qwen3-4B\\Base}}
      & Think    & 90.00 & 15769 & 7.00 & 20.00 & 16384 & 5.00 \\
    & & No-think & 74.00 & 16224 & 0.00 & 30.00 & 16384 & 0.00 \\
    \cmidrule(lr){2-9}
    & \multirow{2}{*}{\shortstack{Qwen3-4B\\(hybrid)}}
      & Think    & 94.80 & 6050 & 3.56 & 60.00 & 31733 & 7.02 \\
    & & No-think & 80.80 & 677  & 0.13 & 40.00 & 4708  & 0.39 \\
    \cmidrule(lr){2-9}
    & \multirow{2}{*}{\shortstack{Qwen2.5-7B\\Instruct}}
      & Think    & 86.40 & 2924 & 2.78 & 32.00 & 13809 & 8.16 \\
    & & No-think & 74.80 & 525  & 0.00 & 22.00 & 4120  & 0.02 \\
    \bottomrule
    \end{tabular}
    }
\end{table*}

\begin{table*}[h!]
    \centering
    \caption{Dataset ablation across different backbone weights on MMLU-STEM and GPQA-Diamond.
    Leftmost column groups results by dataset (\textbf{OpenR1} vs. \textbf{superior-reasoning}); within each group, we report think/no-think performance for each backbone. The \textit{Qwen3-4B} rows use the native hybrid-thinking checkpoint.}
    \vspace{-8pt}
    \label{tab:appendix_ablation_dataset_mmlu_gpqa}
    \setlength{\tabcolsep}{4pt}
    \resizebox{\textwidth}{!}{
    \begin{tabular}{ccccccccc}
    \toprule
    \multirow{2}{*}{Dataset} & \multirow{2}{*}{Backbone} & \multirow{2}{*}{Mode}
    & \multicolumn{3}{c}{MMLU-STEM} & \multicolumn{3}{c}{GPQA-Diamond} \\
    \cmidrule(lr){4-6} \cmidrule(lr){7-9}
    & & & Acc. & Len. & \#Refl./Ans. & Acc. & Len. & \#Refl./Ans. \\
    \midrule
    \multirow{8}{*}{OpenR1}
    & \multirow{2}{*}{\shortstack{Qwen3-4B\\Base}}
      & Think    & 33.33 & 16384 & 19.40 & 8.00  & 16384 & 14.10 \\
    & & No-think & 83.33 & 16384 & 0.00  & 24.00 & 16384 & 0.00 \\
    \cmidrule(lr){2-9}
    & \multirow{2}{*}{\shortstack{Qwen3-4B\\(hybrid)}}
      & Think    & 88.89 & 999 & 25.98 & 66.00 & 5372 & 16.88 \\
    & & No-think & 83.33 & 160 & 0.00  & 30.00 & 603  & 0.00 \\
    \cmidrule(lr){2-9}
    & \multirow{2}{*}{\shortstack{Qwen2.5-7B\\Instruct}}
      & Think    & 77.78 & 848 & 62.76 & 6.00  & 1821 & 41.00 \\
    & & No-think & 88.89 & 352 & 0.02  & 40.00 & 871  & 0.00 \\
    \midrule
    \multirow{8}{*}{\shortstack{Superior-\\reasoning}}
    & \multirow{2}{*}{\shortstack{Qwen3-4B\\Base}}
      & Think    & 94.44 & \-- & \-- & 30.00 & \-- & \-- \\
    & & No-think & 88.89 & \-- & \-- & 44.00 & \-- & \-- \\
    \cmidrule(lr){2-9}
    & \multirow{2}{*}{\shortstack{Qwen3-4B\\(hybrid)}}
      & Think    & 91.11 & 794 & 6.36 & 42.00 & 8268 & 3.26 \\
    & & No-think & 93.33 & 117 & 0.06 & 26.00 & 298  & 0.00 \\
    \cmidrule(lr){2-9}
    & \multirow{2}{*}{\shortstack{Qwen2.5-7B\\Instruct}}
      & Think    & 88.89 & 429 & 8.16 & 34.00 & 7044 & 4.44 \\
    & & No-think & 92.22 & 139 & 0.02 & 30.00 & 3857 & 0.00 \\
    \bottomrule
    \end{tabular}
    }
\end{table*}

\subsection{Broader Impact and Future Applications}
\label{sec:appendix_broader_impact}

\paragraph{Agents.}
Agent trajectories interleave routine steps---parsing a tool result, filling a template, deciding which
file to open---with a much smaller number of genuinely deliberative ones~\citep{yao2023react}. Because agent
benchmarks score long horizons in which each step conditions the next, residual leakage compounds rather than
staying local, so a \nothink{} mode that can be trusted lets a system spend tokens only where deliberation is
actually needed~\citep{liu2024agentbench,yang2026agentce}. The same property helps downstream: records
written back into long-term memory stay compact when the underlying responses are direct rather than padded
with abandoned reasoning~\citep{shinn2023reflexion,long2026memtrace,chen2026agenticmemory}. It also composes with how agent
capabilities are increasingly packaged---a skill library can declare the reasoning mode a skill expects at
call time~\citep{wang2024voyager,yang2026agentskills}, and persistent multi-agent workspaces, where many
calls are routine bookkeeping, benefit from the same effect~\citep{wang2026atwz}.

\paragraph{Conventional inference serving.}
In interactive deployment most everyday queries want a short, direct answer, and a \nothink{} mode that
reliably delivers one lowers both the cost of a request and the time until the user has a complete response,
while \think{} remains available for the queries that need it. This acts on a different axis from most
inference-time work: it reduces how many tokens are generated in the first place rather than the cost of
producing each one. IO-aware exact attention and lossless weight compression leave the token stream
untouched~\citep{dao2022flashattention,zhang2025dfloat11}, as does post-training weight
quantization or low-rank compression~\citep{frantar2023optq,lin2024awq,wang2025proteinlm,li2026jwsvd}, though quantization can also shift model behaviour rather than merely compress it~\citep{fu2025quantized}. Cache-side methods instead target the memory
that accumulates as generation proceeds~\citep{liu2024kivi,hariri2026quantize,li2025faedkv}. Because these axes are
largely independent, an architectural mode boundary can be stacked with any of them.

\paragraph{Model architecture.}
Conditional computation is the established route to spending capacity selectively, activating a subset
of parameters per token through learned sparse experts~\citep{fedus2022switch,jiang2024mixtral}. \ple{}
occupies a deliberately minimal point in that space: two experts, no router parameters, and a routing
decision fixed once per sequence by a token the interface already exposes. Methods that instead shorten
reasoning after the decision to reason has been made~\citep{sui2025stop} are complementary, since they act on
the length of a \think{} trace rather than on the boundary between modes. Both sit inside the broader question of how inference-time compute should be allocated and compared across regimes~\citep{hariri2026testtime}.

\paragraph{Post-training.}
The \nothink{} behaviour \ple{} produces is a property of post-training, not of the architecture alone.
Post-training effects are known to interact across domains in order-sensitive ways, and training-level
mitigation of leakage is itself sensitive to data scale and
mixture~\citep{yang2026domains,wang2025demystifying}. Mode separation should therefore be re-measured
whenever the training mixture changes, rather than assumed to transfer from one recipe to another.

\clearpage

\newpage

\section{Formal Architecture Specification}\label{sec:formal_architecture_specification}
\textbf{Path-locked dual-expert decoder} Consider a standard pre-norm decoder layer:
\begin{align}
    \mathbf{h}^{(l)}_{\text{attn}}
    &= \mathbf{h}^{(l)} + \mathrm{Attn}^{(l)}\!\left(\mathrm{LN}^{(l)}_1(\mathbf{h}^{(l)})\right); 
    \mathbf{h}^{(l+1)}
    = \mathbf{h}^{(l)}_{\text{attn}} + \mathrm{MLP}^{(l)}\!\left(\mathrm{LN}^{(l)}_2(\mathbf{h}^{(l)}_{\text{attn}})\right).
\end{align}

\ple{} changes only the second part. Each decoder MLP is replaced by a pair of experts, and the selected route \(r \in \{0,1\}\) determines which expert is active:
\begin{align}
    \mathbf{h}^{(l+1)}
    = \mathbf{h}^{(l)}_{\text{attn}} + \mathrm{MLP}^{(l)}_{r}\!\left(\mathrm{LN}^{(l)}_2(\mathbf{h}^{(l)}_{\text{attn}})\right),
\end{align}
where \(r=0\) denotes the \nothink{} expert and \(r=1\) denotes the \think{} expert. For the Qwen backbones used in our main experiments, each expert preserves the source model's original SwiGLU-style MLP parameterization:
\begin{align}
    \mathrm{MLP}^{(l)}_{r}(\mathbf{x})
    =
    W^{(l,r)}_{\mathrm{down}}
    \Big(
        \phi\!\left(W^{(l,r)}_{\mathrm{gate}}\mathbf{x}\right)
        \odot
        W^{(l,r)}_{\mathrm{up}}\mathbf{x}
    \Big),
\end{align}
where \(\phi(\cdot)\) is the activation function and \(\odot\) is elementwise multiplication. The two experts are structurally identical; they differ only in parameters and in which control mode activates them. All non-MLP components remain shared across modes, including self-attention, positional encoding, normalization, token embeddings, and the LM head. We duplicate only the MLP because the two modes should share the same contextualization and stored knowledge, but need not share the final feed-forward transformation that turns those representations into long-form reasoning or concise direct answers. This makes \ple{} a targeted intervention rather than a full model duplication scheme.
\textbf{Deterministic control-token routing} \ple{} does not use a learned router. Instead, routing is determined by control tokens in the input. We register two tokenizer-level special tokens, \texttt{/no\_think} and \texttt{/think}, each represented by a single token ID. Given an input sequence \(\mathbf{x}\), the routing function is
\begin{align}
r(\mathbf{x}) =
\begin{cases}
1, & \text{if the final control token in } \mathbf{x} \text{ is } \texttt{/think},\\
0, & \text{if the final control token in } \mathbf{x} \text{ is } \texttt{/no\_think},\\
r_{\mathrm{default}}, & \text{if no control token is present},
\end{cases}
\end{align}
where \(r_{\mathrm{default}}=0\) routes to the \nothink{} expert. The \emph{last control token wins} rule serves two purposes. First, it makes routing fully deterministic and interpretable from the prompt alone. Second, it allows a later template or system token to override an earlier user-supplied token without introducing any learned gating behavior.
During autoregressive generation, the route is determined from the accumulated prompt at the first decoding step and then cached and reused for all later steps. This ensures that the active expert remains fixed throughout decoding. Our current implementation uses one scalar routing index per forward pass. Accordingly, we do not mix \think{} and \nothink{} examples within the same batch; each forward pass uses a single mode. This keeps the routing logic simple while preserving the intended turn-level control semantics.

\textbf{Routing-conditioned supervised fine-tuning} We fine-tune \ple{} with standard causal language modeling loss on chat-formatted examples tagged with either \texttt{/think} or \texttt{/no\_think}. Let \(\mathcal{B}_r\) denote a minibatch routed to mode \(r \in \{0,1\}\). The training objective is
\begin{align}
\mathcal{L}(\theta;\mathcal{B}_r)
=
-
\sum_{(\mathbf{x},\mathbf{y}) \in \mathcal{B}_r}
\sum_{t=1}^{|\mathbf{y}|}
\log p_{\theta}\!\left(y_t \mid \mathbf{x}, \mathbf{y}_{<t}; r\right).
\end{align}

Let \(\theta_{\mathrm{sh}}\) denote the shared parameters and \(\theta_0,\theta_1\) denote the two expert parameter sets. Because only the routed expert is executed, for any batch using route \(r\),
\begin{align}
    \frac{\partial \mathcal{L}}{\partial \theta_{1-r}} = 0,
    \qquad
    \frac{\partial \mathcal{L}}{\partial \theta_r} \neq 0,
    \qquad
    \frac{\partial \mathcal{L}}{\partial \theta_{\mathrm{sh}}} \neq 0.
\end{align}

The inactive expert therefore receives no gradient, while the shared backbone is updated by both modes. This is the central mechanism of \ple{}: it removes direct parameter sharing between the two MLP pathways while still allowing both modes to benefit from a common representational substrate. At the same time, because attention and output layers remain shared, \ple{} should be understood as \emph{targeted architectural separation}, not as full model duplication.

\section{Theoretical account: Path-Lock as partial block-diagonalization}
\label{sec:theory}

This section formalizes \ple{} as a partially decoupled two-objective optimization problem.
The derivation is guided by three observations from prior work:
(i) multi-task learning is inherently multi-objective and can suffer from conflicting gradients
\citep{sener2018multi,yu2020gradient,liu2021cagrad};
(ii) expert routing is useful when different objectives prefer different submodels
\citep{jordan1994hme,ma2018mmoe};
and (iii) Transformer feed-forward layers are a privileged locus for token-level prediction shaping
\citep{geva2021ffn,geva2022ffn,dai2022knowledge}.

\paragraph{Setup.}
We model hybrid thinking as conditional language modeling with an observed mode variable
\(r \in \{0,1\}\), where \(r=0\) denotes \nothink{} and \(r=1\) denotes \think{}.
Let
\begin{align*}
\alpha &= \text{shared parameters (embeddings, attention, norms, LM head)}, \\
\beta_0 &= \text{no-think expert parameters}, \\
\beta_1 &= \text{think expert parameters}.
\end{align*}
Thus the full parameter vector is
\begin{align*}
\theta = (\alpha,\beta_0,\beta_1).
\end{align*}

For an example \((x,y,r)\), the \ple{} model defines
\begin{align*}
p_\theta(y \mid x,r)
=
\prod_{t=1}^{|y|}
p_\theta\!\left(y_t \mid x,y_{<t}; \alpha,\beta_r\right).
\end{align*}

\paragraph{Observed-route formulation.}
Introduce an expert variable \(z \in \{0,1\}\).
Because routing is deterministic from the control token \(c\),
\begin{align*}
p(z \mid c) = \mathbf{1}\{z = r(c)\},
\end{align*}
where \(r(c)\) is the route selected by the control-token rule.
Therefore
\begin{align*}
p_\theta(y \mid x,c)
&=
\sum_{z \in \{0,1\}} p(z \mid c)\, p_\theta(y \mid x,z) \\
&=
\sum_{z \in \{0,1\}} \mathbf{1}\{z = r(c)\}\, p_\theta(y \mid x,z) \\
&=
p_\theta(y \mid x,r(c)).
\end{align*}
Hence the route variable is \emph{observed}, not latent.
In this observed-route setting, a learned router is not statistically necessary for identifying the routed conditional model; outside this setting (e.g., if the route is unobserved or must be inferred at test time), a learned router may still be useful or necessary.

\paragraph{Modeling assumption.}
The reduction from the control token \(c\) to the observed route \(r(c)\) assumes that, within this theoretical abstraction, \(c\) affects the conditional distribution only through route selection:
\begin{align*}
p_\theta(y \mid x,c,z) = p_\theta(y \mid x,z).
\end{align*}
Equivalently, once the route \(z\) is fixed, the control token carries no additional route-independent signal for prediction.
If, in the implemented model, \(c\) is also processed by the shared backbone through a distinct embedding or any other route-specific component, then the identities below should be read as an abstraction that isolates routing from those additional effects.

\paragraph{Dataset decomposition.}
Let
\begin{align*}
\mathcal{D} = \mathcal{D}_0 \sqcup \mathcal{D}_1,
\end{align*}
where \(\mathcal{D}_0\) contains no-think examples and \(\mathcal{D}_1\) contains think examples.
Define mode frequencies
\begin{align*}
\pi_r = \frac{|\mathcal{D}_r|}{|\mathcal{D}|}, \qquad \pi_0 + \pi_1 = 1.
\end{align*}
For each mode, define the expected autoregressive loss
\begin{align}
\mathcal{L}_r(\alpha,\beta_r)
=
\mathbb{E}_{(x,y)\sim \mathcal{D}_r}
\left[
-\sum_{t=1}^{|y|}
\log p_\theta(y_t \mid x,y_{<t}; \alpha,\beta_r)
\right].
\label{eq:mode_loss}
\end{align}
Then the full \ple{} objective is
\begin{align}
\mathcal{L}_{\mathrm{PLE}}(\alpha,\beta_0,\beta_1)
&=
\mathbb{E}_{(x,y,r)\sim \mathcal{D}}
\left[
-\sum_{t=1}^{|y|}
\log p_\theta(y_t \mid x,y_{<t}; \alpha,\beta_r)
\right] \\
&=
\sum_{r\in\{0,1\}} \pi_r \mathcal{L}_r(\alpha,\beta_r) \\
&=
\pi_0 \mathcal{L}_0(\alpha,\beta_0)
+
\pi_1 \mathcal{L}_1(\alpha,\beta_1).
\label{eq:ple_decomposition}
\end{align}

Unless stated otherwise, all exact statements below refer to the example-weighted objective in \eqref{eq:ple_decomposition}, with \(\pi_r = |\mathcal{D}_r|/|\mathcal{D}|\).
The later length-asymmetry discussion explicitly switches to a token-summed training heuristic.
If training instead uses token-weighted normalization, or adds auxiliary router losses, cross-expert regularizers, or other route-coupling terms, then the gradient and Hessian identities below acquire additional additive terms.

\begin{proposition}[Exact expert-gradient decoupling]
Under \eqref{eq:ple_decomposition},
\begin{align*}
\nabla_{\beta_0}\mathcal{L}_{\mathrm{PLE}}
&=
\pi_0 \nabla_{\beta_0}\mathcal{L}_0,
\\
\nabla_{\beta_1}\mathcal{L}_{\mathrm{PLE}}
&=
\pi_1 \nabla_{\beta_1}\mathcal{L}_1,
\\
\nabla_{\alpha}\mathcal{L}_{\mathrm{PLE}}
&=
\pi_0 \nabla_{\alpha}\mathcal{L}_0
+
\pi_1 \nabla_{\alpha}\mathcal{L}_1.
\end{align*}
In particular,
\begin{align*}
\nabla_{\beta_0}\mathcal{L}_1 = 0,
\qquad
\nabla_{\beta_1}\mathcal{L}_0 = 0.
\end{align*}
\end{proposition}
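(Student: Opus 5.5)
The plan is to differentiate the decomposition \eqref{eq:ple_decomposition} term by term, using that each mode loss $\mathcal{L}_r$ depends on the parameter vector only through the pair $(\alpha,\beta_r)$. First I record this functional-dependence fact explicitly. By the definition \eqref{eq:mode_loss}, every conditional $p_\theta(y_t \mid x,y_{<t};\alpha,\beta_r)$ is computed by a forward pass in which only $\mathrm{MLP}_r^{(l)}$ is executed in each layer, as in the formal specification of Appendix~\ref{sec:formal_architecture_specification}. Hence $\mathcal{L}_r$, viewed as a function of $\theta=(\alpha,\beta_0,\beta_1)$, is constant in $\beta_{1-r}$. It follows immediately that $\nabla_{\beta_{1-r}}\mathcal{L}_r=0$, which gives the two cross-gradient identities $\nabla_{\beta_0}\mathcal{L}_1=0$ and $\nabla_{\beta_1}\mathcal{L}_0=0$. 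These require only differentiability of $\mathcal{L}_r$ in $(\alpha,\beta_r)$, which holds for the smooth SwiGLU, attention and softmax components (and, if desired, almost everywhere for nonsmooth activations).

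Second, I apply linearity of the gradient to $\mathcal{L}_{\mathrm{PLE}}=\pi_0\mathcal{L}_0+\pi_1\mathcal{L}_1$. The key point is that the weights $\pi_r=|\mathcal{D}_r|/|\mathcal{D}|$ are fixed constants independent of $\theta$, since the dataset partition is determined by the observed route $r(c)$ and not by any parameter. Taking $\nabla_{\beta_0}$ gives $\pi_0\nabla_{\beta_0}\mathcal{L}_0+\pi_1\nabla_{\beta_0}\mathcal{L}_1$, and the second term vanishes by the first step. The $\beta_1$ case is symmetric. Taking $\nabla_\alpha$ simply gives the weighted sum, because both mode losses genuinely depend on $\alpha$.

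Third, I justify interchanging the gradient with the expectation in \eqref{eq:mode_loss}. For a finite dataset this is just a finite sum, so no dominated-convergence argument is needed. I would note this explicitly, and remark that for a population expectation a standard integrability condition suffices.

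The only real subtlety is the first step, namely ensuring that $\beta_{1-r}$ truly does not enter the computation graph for mode-$r$ examples. This rests on the observed-route formulation and the modeling assumption stated above: routing is a deterministic function of the control token, with no learned gate, so there is no soft mixture through which $\beta_{1-r}$ could leak. It also rests on batches not mixing modes, or equivalently on the per-example loss using a single route. I would state explicitly that if a learned router, cross-expert regularizers, or token-weighted normalization that couples the modes were added, extra additive terms would appear. This matches the caveat preceding the proposition.
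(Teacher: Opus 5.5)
Your proposal is correct and follows the paper's proof. Like the paper, you differentiate $\pi_0\mathcal{L}_0(\alpha,\beta_0)+\pi_1\mathcal{L}_1(\alpha,\beta_1)$ term by term, note that the cross terms vanish because $\mathcal{L}_r$ does not depend on $\beta_{1-r}$, and get the weighted sum for $\alpha$ because both losses depend on it. Your extra remarks on the constancy of $\pi_r$ and the finite-sum interchange are harmless rigor that the paper leaves implicit. One imprecision: what is needed is only that each example uses a single route, which the definition \eqref{eq:mode_loss} already encodes. Mode-pure batching is therefore not required and is not equivalent to that condition.
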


\begin{proof}
Differentiate \eqref{eq:ple_decomposition} with respect to \(\beta_0\):
\begin{align*}
\nabla_{\beta_0}\mathcal{L}_{\mathrm{PLE}}
&=
\nabla_{\beta_0}
\Big(
\pi_0 \mathcal{L}_0(\alpha,\beta_0)
+
\pi_1 \mathcal{L}_1(\alpha,\beta_1)
\Big) \\
&=
\pi_0 \nabla_{\beta_0}\mathcal{L}_0
+
\pi_1 \underbrace{\nabla_{\beta_0}\mathcal{L}_1}_{=\,0} \\
&=
\pi_0 \nabla_{\beta_0}\mathcal{L}_0.
\end{align*}
The zero term appears because \(\mathcal{L}_1\) depends on \(\beta_1\) but not on \(\beta_0\).

Similarly,
\begin{align*}
\nabla_{\beta_1}\mathcal{L}_{\mathrm{PLE}}
&=
\nabla_{\beta_1}
\Big(
\pi_0 \mathcal{L}_0(\alpha,\beta_0)
+
\pi_1 \mathcal{L}_1(\alpha,\beta_1)
\Big) \\
&=
\pi_0 \underbrace{\nabla_{\beta_1}\mathcal{L}_0}_{=\,0}
+
\pi_1 \nabla_{\beta_1}\mathcal{L}_1 \\
&=
\pi_1 \nabla_{\beta_1}\mathcal{L}_1.
\end{align*}
For the shared parameters \(\alpha\), both mode losses depend on \(\alpha\), so
\begin{align*}
\nabla_{\alpha}\mathcal{L}_{\mathrm{PLE}}
=
\pi_0 \nabla_{\alpha}\mathcal{L}_0
+
\pi_1 \nabla_{\alpha}\mathcal{L}_1.
\end{align*}
\end{proof}

\paragraph{Pointwise consequence.}
For a single example \((x_i,y_i,r_i)\), define
\begin{align*}
\ell_i(\alpha,\beta_0,\beta_1)
=
-\sum_{t=1}^{|y_i|}
\log p_\theta(y_{i,t} \mid x_i,y_{i,<t}; \alpha,\beta_{r_i}).
\end{align*}
Then for any \(k\in\{0,1\}\),
\begin{align}
\frac{\partial \ell_i}{\partial \beta_k}
=
\begin{cases}
-\displaystyle\sum_{t=1}^{|y_i|}
\frac{\partial}{\partial \beta_{r_i}}
\log p_\theta(y_{i,t}\mid x_i,y_{i,<t};\alpha,\beta_{r_i}),
& k=r_i,\\[10pt]
0, & k\neq r_i.
\end{cases}
\label{eq:pointwise_grad}
\end{align}
So the inactive expert receives \emph{exactly zero} gradient on that example.

All curvature-based statements below assume the relevant local second derivatives exist and that differentiation may be interchanged with expectation in the usual way.

\paragraph{Partial block-diagonalization of the curvature.}
Let
\begin{align*}
H_{ab}^{(r)} := \nabla^2_{a,b}\mathcal{L}_r,
\qquad
a,b \in \{\alpha,\beta_0,\beta_1\}.
\end{align*}
Because \(\mathcal{L}_0\) does not depend on \(\beta_1\) and \(\mathcal{L}_1\) does not depend on \(\beta_0\),
\begin{align*}
\nabla^2_{\beta_0\beta_1}\mathcal{L}_{\mathrm{PLE}} = 0,
\qquad
\nabla^2_{\beta_1\beta_0}\mathcal{L}_{\mathrm{PLE}} = 0.
\end{align*}
Hence the Hessian takes the form
\begin{align}
\nabla^2 \mathcal{L}_{\mathrm{PLE}}
=
\begin{bmatrix}
\pi_0 H^{(0)}_{\alpha\alpha} + \pi_1 H^{(1)}_{\alpha\alpha}
&
\pi_0 H^{(0)}_{\alpha\beta_0}
&
\pi_1 H^{(1)}_{\alpha\beta_1}
\\[3pt]
\pi_0 H^{(0)}_{\beta_0\alpha}
&
\pi_0 H^{(0)}_{\beta_0\beta_0}
&
0
\\[3pt]
\pi_1 H^{(1)}_{\beta_1\alpha}
&
0
&
\pi_1 H^{(1)}_{\beta_1\beta_1}
\end{bmatrix}.
\label{eq:block_hessian}
\end{align}
Equation~\eqref{eq:block_hessian} is the precise sense in which \ple{} \emph{partially block-diagonalizes}
the hybrid-thinking optimization problem:
the expert blocks are exactly decoupled from each other, while both remain coupled to the shared backbone.

This block structure is exact only for the direct expert--expert interactions in the stated objective.
It does not imply full disentanglement over training, because both modes remain coupled through the shared parameters \(\alpha\): an update from one mode can change \(\alpha\), which in turn changes future gradients for the other mode.

\paragraph{Dense hybrid training as a conflicting shared-objective problem.}
Now consider a dense baseline with the same shared backbone \(\alpha\) but a \emph{single} shared MLP parameter vector \(\beta\).
Its objective is
\begin{align*}
\mathcal{L}_{\mathrm{dense}}(\alpha,\beta)
=
\pi_0 \mathcal{L}_0(\alpha,\beta)
+
\pi_1 \mathcal{L}_1(\alpha,\beta).
\end{align*}
Let
\begin{align*}
g_0 := \nabla_\beta \mathcal{L}_0(\alpha,\beta),
\qquad
g_1 := \nabla_\beta \mathcal{L}_1(\alpha,\beta).
\end{align*}
Then the dense expert gradient is
\begin{align*}
\nabla_\beta \mathcal{L}_{\mathrm{dense}}
=
\pi_0 g_0 + \pi_1 g_1.
\end{align*}

\begin{proposition}[One-step interference criterion]
Consider one SGD step with learning rate \(\eta>0\):
\begin{align*}
\Delta \beta_{\mathrm{dense}}
=
-\eta(\pi_0 g_0 + \pi_1 g_1).
\end{align*}
Then the first-order change in mode-0 loss is
\begin{align}
\mathcal{L}_0(\beta+\Delta \beta_{\mathrm{dense}})
=
\mathcal{L}_0(\beta)
-
\eta\Big(
\pi_0 \|g_0\|^2 + \pi_1 g_0^\top g_1
\Big)
+
O(\eta^2).
\label{eq:mode0_taylor}
\end{align}
Therefore the dense update increases the mode-\(0\) loss at first order whenever
\begin{align}
g_0^\top g_1 < -\frac{\pi_0}{\pi_1}\|g_0\|^2.
\label{eq:interference_condition}
\end{align}
The same statement holds symmetrically for mode \(1\).
\end{proposition}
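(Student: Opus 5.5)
The plan is a first-order Taylor expansion of \(\mathcal{L}_0\) in \(\beta\) with \(\alpha\) held fixed, since the stated dense update moves only \(\beta\). I assume, as the curvature discussion above does, that \(\mathcal{L}_0(\alpha,\cdot)\) is twice continuously differentiable near \(\beta\). Then Taylor's theorem gives
\begin{align*}
\mathcal{L}_0(\beta+\Delta\beta) = \mathcal{L}_0(\beta) + g_0^\top \Delta\beta + O(\|\Delta\beta\|^2).
\end{align*}
With \(\Delta\beta = \Delta\beta_{\mathrm{dense}} = -\eta(\pi_0 g_0+\pi_1 g_1)\), the linear term becomes
\begin{align*}
-\eta\, g_0^\top(\pi_0 g_0 + \pi_1 g_1) = -\eta\big(\pi_0\|g_0\|^2 + \pi_1 g_0^\top g_1\big).
\end{align*}
The remainder is \(O(\eta^2)\) because \(\|\Delta\beta\| = \eta\,\|\pi_0 g_0+\pi_1 g_1\|\), and \(g_0,g_1\) are fixed vectors evaluated at the current iterate. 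Indeed, \(\|\Delta\beta\|^2 = \eta^2\|\pi_0 g_0+\pi_1 g_1\|^2\), with a constant bounded by a local bound on the Hessian \(H^{(0)}_{\beta\beta}\) along the segment. This establishes \eqref{eq:mode0_taylor}.

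For the interference condition, the first-order coefficient of \(\eta\) in the loss change is
\begin{align*}
-(\pi_0\|g_0\|^2+\pi_1 g_0^\top g_1).
\end{align*}
This coefficient is strictly positive exactly when \(\pi_1 g_0^\top g_1 < -\pi_0\|g_0\|^2\). Dividing by \(\pi_1>0\) gives \eqref{eq:interference_condition}; this assumes both modes are present in the data, and if \(\pi_1=0\) the condition is vacuous. When the coefficient is strictly positive, then for all sufficiently small \(\eta>0\) the linear term dominates the \(O(\eta^2)\) remainder, so \(\mathcal{L}_0\) strictly increases. That is the precise meaning of ``increases at first order.''

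The symmetric statement follows by swapping the indices \(0\leftrightarrow1\): expand \(\mathcal{L}_1\) around \(\beta\) using \(g_1\), which yields the condition \(g_1^\top g_0 < -\tfrac{\pi_1}{\pi_0}\|g_1\|^2\).

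The only delicate point is making the \(O(\eta^2)\) claim honest, i.e., ensuring the remainder is uniform in a neighborhood of \(\beta\). I handle this with the local smoothness assumption already declared before the curvature discussion, using the mean-value form of the remainder with a Hessian bound on a compact ball. Everything else is direct algebra. I would close with a remark contrasting this with the PLE case: by the exact expert-gradient decoupling proposition, the mode-1 gradient never enters the update of \(\beta_0\), so the cross term \(g_0^\top g_1\) is absent for the expert parameters.
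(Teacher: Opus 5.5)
Your proposal is correct and follows the paper's proof: a first-order Taylor expansion of $\mathcal{L}_0$ in $\beta$, substitution of the dense update to get the linear term $-\eta(\pi_0\|g_0\|^2+\pi_1 g_0^\top g_1)$, then rearranging (dividing by $\pi_1>0$) to obtain the interference condition. Your extra care is a welcome refinement the paper leaves implicit: you bound the $O(\eta^2)$ remainder via a local Hessian bound, make the requirement $\pi_1>0$ explicit, and read ``increases at first order'' as strict increase for all sufficiently small $\eta$.
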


\begin{proof}
Take a first-order Taylor expansion of \(\mathcal{L}_0\) around \(\beta\):
\begin{align*}
\mathcal{L}_0(\beta+\Delta \beta_{\mathrm{dense}})
=
\mathcal{L}_0(\beta)
+
g_0^\top \Delta \beta_{\mathrm{dense}}
+
O(\|\Delta \beta_{\mathrm{dense}}\|^2).
\end{align*}
Substitute the dense update:
\begin{align*}
g_0^\top \Delta \beta_{\mathrm{dense}}
&=
g_0^\top \big[-\eta(\pi_0 g_0 + \pi_1 g_1)\big] \\
&=
-\eta \pi_0 g_0^\top g_0 - \eta \pi_1 g_0^\top g_1 \\
&=
-\eta \pi_0 \|g_0\|^2 - \eta \pi_1 g_0^\top g_1.
\end{align*}
Substituting back yields \eqref{eq:mode0_taylor}.
The dense update increases \(\mathcal{L}_0\) at first order iff
\begin{align*}
-\eta \Big(\pi_0 \|g_0\|^2 + \pi_1 g_0^\top g_1\Big) > 0,
\end{align*}
which is equivalent to \eqref{eq:interference_condition}.
\end{proof}

By contrast, under the same full-objective weighting, the expected \ple{} update for the no-think expert is
\begin{align*}
\Delta \beta_0 = -\eta \pi_0 g_0,
\end{align*}
so
\begin{align*}
\mathcal{L}_0(\beta_0+\Delta\beta_0)
=
\mathcal{L}_0(\beta_0)
-
\eta \pi_0 \|g_0\|^2
+
O(\eta^2),
\end{align*}
with no cross-term involving \(g_1\).
If one instead conditions on a mode-\(0\) minibatch, the same calculation gives \(\Delta\beta_0=-\eta g_0\).
Thus \ple{} removes \emph{direct} cross-mode interference exactly in the expert parameters, while indirect coupling through the shared backbone remains.

\paragraph{A local quadratic theory of the dense compromise.}
To isolate the mechanism, fix the shared parameters at some local value \(\bar{\alpha}\) and write
\begin{align*}
L_r(\beta) := \mathcal{L}_r(\bar{\alpha},\beta).
\end{align*}

\begin{proposition}[Exact fixed-backbone dominance of separate experts]
Fix any shared parameter value \(\bar{\alpha}\), and define \(L_r(\beta):=\mathcal{L}_r(\bar{\alpha},\beta)\).
Then
\begin{align*}
\min_{\beta_0,\beta_1}\ \pi_0 L_0(\beta_0)+\pi_1 L_1(\beta_1)
\;\le\;
\min_{\beta}\ \pi_0 L_0(\beta)+\pi_1 L_1(\beta).
\end{align*}
\end{proposition}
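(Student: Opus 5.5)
The argument is a feasibility (restriction) argument: the dense problem is the separate-expert problem restricted to the diagonal $\beta_0=\beta_1$. Write $F(\beta_0,\beta_1):=\pi_0 L_0(\beta_0)+\pi_1 L_1(\beta_1)$ and $G(\beta):=\pi_0 L_0(\beta)+\pi_1 L_1(\beta)$, so that $G(\beta)=F(\beta,\beta)$ for every $\beta$. This identity is the whole content. It uses only that both experts are clones of the dense MLP, i.e. they share the same parameter space, which is how \ple{} is initialized. As a result, the feasible set of the dense problem embeds into that of the separate-expert problem via $\beta\mapsto(\beta,\beta)$, while the objective value is unchanged.

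The key steps, in order, are as follows.
\begin{itemize}
\item[(i)] Note that fixing $\bar\alpha$ makes both objectives functions of the MLP parameters only, so the comparison is between two optimization problems over nested domains.
\item[(ii)] For any $\beta$, bound $\inf_{\beta_0,\beta_1}F\le F(\beta,\beta)=G(\beta)$.
\item[(iii)] Take the infimum over $\beta$ on the right to get $\inf F\le\inf G$.
\end{itemize}

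I would state the proof with infima and remark that the same inequality holds for minima whenever they are attained. If the dense minimum is attained at some $\beta^\star$, then $(\beta^\star,\beta^\star)$ is feasible for the separate problem and gives the bound directly.

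It is also worth noting, though it is not needed, that $F$ separates: $\inf F=\pi_0\inf L_0+\pi_1\inf L_1$. This makes the gap explicit as
\[
\inf G-\pi_0\inf L_0-\pi_1\inf L_1\ \ge 0,
\]
which is zero exactly when a common minimizer exists. That remark sets up the local quadratic account that follows.

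There is no substantive obstacle. The one point needing care is the existence of minimizers, since cross-entropy losses over unbounded parameter spaces need not attain their minimum. I would therefore phrase the argument with $\inf$, or explicitly assume attainment, so that the displayed $\min$ is interpreted correctly. I would also stress that the statement is for a fixed backbone $\bar\alpha$ and says nothing about joint optimization over $\alpha$. Even so, the same restriction argument extends verbatim to the joint case, because $(\alpha,\beta,\beta)$ is feasible for \ple{} whenever $(\alpha,\beta)$ is feasible for the dense model.
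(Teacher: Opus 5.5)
Your proposal is correct and uses the same restriction argument as the paper: the dense problem is the separate-expert problem constrained to the diagonal $\beta_0=\beta_1$, so minimizing over the larger feasible set cannot give a larger value, and phrasing it with infima is a slightly more careful version of the paper's one-line proof. One small caveat on your side remark: the gap vanishing is equivalent to a common minimizer existing only when the infima are attained and $\pi_0,\pi_1>0$; in general only the ``if'' direction holds.
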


\begin{proof}
The dense model is the constrained subset of the separate-expert model obtained by imposing \(\beta_0=\beta_1=\beta\).
Minimizing over the larger feasible set \((\beta_0,\beta_1)\) therefore cannot yield a larger objective value than minimizing over the restricted set \(\beta_0=\beta_1\).
\end{proof}

Let \(\beta_r^\star\) denote a local minimizer of \(L_r\), and let \(H_r \succeq 0\) be the local Hessian at that point.
A second-order Taylor expansion gives
\begin{align}
L_r(\beta)
\approx
L_r(\beta_r^\star)
+
\frac{1}{2}
(\beta-\beta_r^\star)^\top H_r (\beta-\beta_r^\star).
\label{eq:quadratic_mode}
\end{align}

For the remainder of this subsection, it is convenient to promote \eqref{eq:quadratic_mode} to the local surrogate
\begin{align*}
\widetilde{L}_r(\beta)
:=
L_r(\beta_r^\star)
+
\frac{1}{2}
(\beta-\beta_r^\star)^\top H_r (\beta-\beta_r^\star),
\label{eq:quadratic_surrogate}
\end{align*}
and to interpret all equalities below in this subsection as statements about the surrogate \(\widetilde{L}_r\), not the exact nonquadratic loss \(L_r\).

The dense expert objective becomes
\begin{align}
L_{\mathrm{dense}}(\beta)
&=
\pi_0 L_0(\beta)+\pi_1 L_1(\beta) \\
&\approx
\pi_0 L_0(\beta_0^\star)+\pi_1 L_1(\beta_1^\star)
+
\frac{\pi_0}{2}(\beta-\beta_0^\star)^\top H_0(\beta-\beta_0^\star)
+
\frac{\pi_1}{2}(\beta-\beta_1^\star)^\top H_1(\beta-\beta_1^\star).
\label{eq:dense_quad_start}
\end{align}

Expand the first quadratic term:
\begin{align*}
(\beta-\beta_0^\star)^\top H_0(\beta-\beta_0^\star)
=
\beta^\top H_0 \beta
-
2\beta^\top H_0 \beta_0^\star
+
(\beta_0^\star)^\top H_0 \beta_0^\star.
\end{align*}
Expand the second:
\begin{align*}
(\beta-\beta_1^\star)^\top H_1(\beta-\beta_1^\star)
=
\beta^\top H_1 \beta
-
2\beta^\top H_1 \beta_1^\star
+
(\beta_1^\star)^\top H_1 \beta_1^\star.
\end{align*}
Substitute both expansions into \eqref{eq:dense_quad_start}:
\begin{align}
L_{\mathrm{dense}}(\beta)
&\approx
C
+
\frac{1}{2}
\beta^\top(\pi_0 H_0+\pi_1 H_1)\beta
-
\beta^\top(\pi_0 H_0 \beta_0^\star + \pi_1 H_1 \beta_1^\star),
\label{eq:dense_quad_expanded}
\end{align}
where
\begin{align*}
C
=
\pi_0 L_0(\beta_0^\star)+\pi_1 L_1(\beta_1^\star)
+
\frac{\pi_0}{2}(\beta_0^\star)^\top H_0 \beta_0^\star
+
\frac{\pi_1}{2}(\beta_1^\star)^\top H_1 \beta_1^\star.
\end{align*}

Differentiate \eqref{eq:dense_quad_expanded} with respect to \(\beta\):
\begin{align*}
\nabla_\beta L_{\mathrm{dense}}(\beta)
=
(\pi_0 H_0+\pi_1 H_1)\beta
-
(\pi_0 H_0 \beta_0^\star+\pi_1 H_1 \beta_1^\star).
\end{align*}
Setting the gradient to zero gives
\begin{align*}
(\pi_0 H_0+\pi_1 H_1)\beta_{\mathrm{dense}}^\star
=
\pi_0 H_0 \beta_0^\star+\pi_1 H_1 \beta_1^\star,
\end{align*}
hence
\begin{align}
\beta_{\mathrm{dense}}^\star
=
(\pi_0 H_0+\pi_1 H_1)^{-1}
(\pi_0 H_0 \beta_0^\star+\pi_1 H_1 \beta_1^\star),
\label{eq:dense_optimum}
\end{align}
assuming \(\pi_0 H_0+\pi_1 H_1\) is invertible.

Equation~\eqref{eq:dense_optimum} shows that the dense MLP is forced to sit at a
curvature-weighted compromise between the two mode-specific optima.

\begin{theorem}[Mode-conflict gap for the local quadratic surrogate]
Under the local quadratic model \eqref{eq:quadratic_mode}, the best \ple{} solution in the expert subspace is
\begin{align*}
(\beta_0,\beta_1) = (\beta_0^\star,\beta_1^\star),
\end{align*}
with value
\begin{align*}
L_{\mathrm{PLE}}^\star
=
\pi_0 L_0(\beta_0^\star)+\pi_1 L_1(\beta_1^\star).
\end{align*}
The dense model incurs excess loss
\begin{align}
\Delta_{\mathrm{conflict}}
:=
L_{\mathrm{dense}}(\beta_{\mathrm{dense}}^\star)
-
L_{\mathrm{PLE}}^\star
=
\frac{1}{2}
\sum_{r\in\{0,1\}}
\pi_r
(\beta_{\mathrm{dense}}^\star-\beta_r^\star)^\top
H_r
(\beta_{\mathrm{dense}}^\star-\beta_r^\star)
\ge 0.
\label{eq:mode_conflict_gap}
\end{align}
\end{theorem}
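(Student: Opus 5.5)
The plan is to work entirely with the quadratic surrogates $\widetilde{L}_r$ introduced above, so every equality below is exact for the surrogate. The proof has three steps: identify the optimum of the separable expert problem, evaluate the dense objective at $\beta_{\mathrm{dense}}^\star$, and subtract.

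\textbf{Step 1: the \ple{} optimum.} The \ple{} expert objective $\pi_0\widetilde{L}_0(\beta_0)+\pi_1\widetilde{L}_1(\beta_1)$ separates across $\beta_0$ and $\beta_1$. Each summand is $L_r(\beta_r^\star)$ plus a quadratic form in $\beta_r-\beta_r^\star$ with $H_r\succeq 0$, so that quadratic form is nonnegative. Hence each summand is minimized at $\beta_r=\beta_r^\star$, where it attains $L_r(\beta_r^\star)$. This gives the claimed minimizer $(\beta_0^\star,\beta_1^\star)$ and the value $L_{\mathrm{PLE}}^\star=\pi_0 L_0(\beta_0^\star)+\pi_1 L_1(\beta_1^\star)$. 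If $H_r$ is singular the minimizer need not be unique, but the minimal value is unchanged, which is all the statement needs.

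\textbf{Step 2: the dense value.} Rather than using the expanded form \eqref{eq:dense_quad_expanded}, I would evaluate the unexpanded form \eqref{eq:dense_quad_start} directly at $\beta=\beta_{\mathrm{dense}}^\star$ from \eqref{eq:dense_optimum}. This requires assuming, as the paper already does, that $\pi_0H_0+\pi_1H_1$ is invertible. The result is
\begin{align*}
L_{\mathrm{dense}}(\beta_{\mathrm{dense}}^\star)=\pi_0L_0(\beta_0^\star)+\pi_1L_1(\beta_1^\star)+\tfrac12\sum_r\pi_r(\beta_{\mathrm{dense}}^\star-\beta_r^\star)^\top H_r(\beta_{\mathrm{dense}}^\star-\beta_r^\star).
\end{align*}

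\textbf{Step 3: subtract.} Subtracting $L_{\mathrm{PLE}}^\star$ cancels the constant terms and leaves exactly \eqref{eq:mode_conflict_gap}. Nonnegativity follows termwise from $\pi_r\ge 0$ and $H_r\succeq 0$. It is also consistent with the fixed-backbone dominance proposition, since the dense model is the restriction $\beta_0=\beta_1$.

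\textbf{Main obstacle.} The computation itself is immediate. The only delicate point is that the identity must be read for the surrogate. For the true losses the Taylor remainder would contribute $O(\|\beta-\beta_r^\star\|^3)$ error terms. I would state this caveat explicitly rather than attempt to bound the remainder.

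Optionally, I would substitute \eqref{eq:dense_optimum} to obtain the closed form
\begin{align*}
\Delta_{\mathrm{conflict}}=\tfrac12\pi_0\pi_1(\beta_0^\star-\beta_1^\star)^\top H_0(\pi_0H_0+\pi_1H_1)^{-1}H_1(\beta_0^\star-\beta_1^\star).
\end{align*}
This form shows the gap vanishes when the two mode optima coincide, but the theorem does not require it.
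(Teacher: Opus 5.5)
Your proposal is correct and follows the paper's proof essentially step for step: the separable expert objective is minimized at $(\beta_0^\star,\beta_1^\star)$, the dense surrogate is evaluated termwise at $\beta_{\mathrm{dense}}^\star$, and subtracting $L_{\mathrm{PLE}}^\star$ leaves the curvature-weighted sum, which is nonnegative because $\pi_r\ge 0$ and $H_r\succeq 0$. Your optional closed form $\tfrac12\pi_0\pi_1(\beta_0^\star-\beta_1^\star)^\top H_0(\pi_0H_0+\pi_1H_1)^{-1}H_1(\beta_0^\star-\beta_1^\star)$ is also correct and reduces to the paper's equal-curvature corollary when $H_0=H_1$, though the theorem does not need it.
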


\begin{proof}
For \ple{}, the expert-subspace objective is
\begin{align*}
L_{\mathrm{PLE}}(\beta_0,\beta_1)
=
\pi_0 L_0(\beta_0)+\pi_1 L_1(\beta_1).
\end{align*}
Because \(\beta_0\) and \(\beta_1\) are separated, minimization reduces to two independent problems,
so the optimum is attained at \((\beta_0^\star,\beta_1^\star)\), with value
\begin{align*}
L_{\mathrm{PLE}}^\star
=
\pi_0 L_0(\beta_0^\star)+\pi_1 L_1(\beta_1^\star).
\end{align*}

For the dense model, evaluate each mode loss at \(\beta_{\mathrm{dense}}^\star\):
\begin{align*}
L_r(\beta_{\mathrm{dense}}^\star)
=
L_r(\beta_r^\star)
+
\frac{1}{2}
(\beta_{\mathrm{dense}}^\star-\beta_r^\star)^\top
H_r
(\beta_{\mathrm{dense}}^\star-\beta_r^\star).
\end{align*}
Multiply by \(\pi_r\) and sum over \(r\in\{0,1\}\):
\begin{align*}
L_{\mathrm{dense}}(\beta_{\mathrm{dense}}^\star)
&=
\sum_{r\in\{0,1\}}
\pi_r L_r(\beta_{\mathrm{dense}}^\star) \\
&=
\sum_{r\in\{0,1\}}
\pi_r L_r(\beta_r^\star)
+
\frac{1}{2}
\sum_{r\in\{0,1\}}
\pi_r
(\beta_{\mathrm{dense}}^\star-\beta_r^\star)^\top
H_r
(\beta_{\mathrm{dense}}^\star-\beta_r^\star).
\end{align*}
Subtract \(L_{\mathrm{PLE}}^\star\) to obtain \eqref{eq:mode_conflict_gap}.
Since each \(H_r \succeq 0\), every quadratic term is nonnegative, so
\(\Delta_{\mathrm{conflict}} \ge 0\).
\end{proof}

Equation~\eqref{eq:mode_conflict_gap} is the formal version of the core intuition for the local quadratic surrogate: if think and no-think prefer different MLP parameters in directions with nonzero local curvature, then a dense hybrid MLP pays a nonzero compromise penalty.
\ple{} removes that surrogate penalty in the expert subspace by allowing each mode to occupy its own local optimum.

\begin{corollary}[Equal-curvature closed form for the local quadratic surrogate]
If \(H_0 = H_1 = H \succ 0\), then
\begin{align*}
\beta_{\mathrm{dense}}^\star
=
\pi_0 \beta_0^\star + \pi_1 \beta_1^\star,
\end{align*}
and the compromise penalty simplifies to
\begin{align}
\Delta_{\mathrm{conflict}}
=
\frac{1}{2}
\pi_0\pi_1
(\beta_0^\star-\beta_1^\star)^\top
H
(\beta_0^\star-\beta_1^\star).
\label{eq:equal_curvature_gap}
\end{align}
\end{corollary}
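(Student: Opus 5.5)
The plan is to specialize the two ingredients already established for the local quadratic surrogate: the general dense optimum \eqref{eq:dense_optimum} and the mode-conflict gap \eqref{eq:mode_conflict_gap} from the preceding theorem. First, substitute \(H_0=H_1=H\) into \eqref{eq:dense_optimum}. Since \(\pi_0+\pi_1=1\), the matrix \(\pi_0H_0+\pi_1H_1\) collapses to \(H\). This matrix is invertible because \(H\succ 0\), which is exactly what the invertibility assumption behind \eqref{eq:dense_optimum} requires. The right-hand side becomes \(H(\pi_0\beta_0^\star+\pi_1\beta_1^\star)\), so left-multiplying by \(H^{-1}\) gives \(\beta_{\mathrm{dense}}^\star=\pi_0\beta_0^\star+\pi_1\beta_1^\star\) immediately.

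Next, I would compute the two displacements that enter the gap formula, again using \(\pi_0+\pi_1=1\):
\begin{align*}
\beta_{\mathrm{dense}}^\star-\beta_0^\star=\pi_1(\beta_1^\star-\beta_0^\star),
\qquad
\beta_{\mathrm{dense}}^\star-\beta_1^\star=\pi_0(\beta_0^\star-\beta_1^\star).
\end{align*}
Writing \(\delta:=\beta_0^\star-\beta_1^\star\), the first displacement is \(-\pi_1\delta\) and the second is \(\pi_0\delta\). Substituting into \eqref{eq:mode_conflict_gap} with the common \(H\), the sign is irrelevant inside a quadratic form, so
\begin{align*}
\Delta_{\mathrm{conflict}}=\tfrac12\big(\pi_0\pi_1^2+\pi_1\pi_0^2\big)\,\delta^\top H\delta=\tfrac12\pi_0\pi_1(\pi_0+\pi_1)\,\delta^\top H\delta,
\end{align*}
and one more use of \(\pi_0+\pi_1=1\) yields \eqref{eq:equal_curvature_gap}.

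There is no real obstacle. The only points needing care are bookkeeping ones. First, the identity \(\pi_0+\pi_1=1\) must be applied both in collapsing the weighted Hessian and in factoring the final coefficient. Second, all equalities should be read as statements about the surrogate \(\widetilde L_r\), as stipulated in this subsection. I would also note that \(H\succ0\), rather than merely \(H\succeq0\), is what licenses the inversion, and that it makes the gap strictly positive whenever \(\beta_0^\star\neq\beta_1^\star\).
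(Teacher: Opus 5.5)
Your proposal is correct and follows the paper's proof essentially step for step: substitute \(H_0=H_1=H\) into the dense-optimum formula to get the weighted average, write the two displacements as \(-\pi_1\delta\) and \(\pi_0\delta\), plug them into the mode-conflict gap, and factor \(\pi_0\pi_1(\pi_0+\pi_1)=\pi_0\pi_1\). One small caveat on your closing aside, which goes beyond the statement: strict positivity of the gap when \(\beta_0^\star\neq\beta_1^\star\) also requires \(\pi_0\pi_1>0\), i.e., both modes must actually be present in the data.
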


\begin{proof}
Substitute \(H_0=H_1=H\) into \eqref{eq:dense_optimum}:
\begin{align*}
\beta_{\mathrm{dense}}^\star
&=
(\pi_0 H+\pi_1 H)^{-1}
(\pi_0 H\beta_0^\star+\pi_1 H\beta_1^\star) \\
&=
\big((\pi_0+\pi_1)H\big)^{-1}
H(\pi_0\beta_0^\star+\pi_1\beta_1^\star) \\
&=
\pi_0\beta_0^\star+\pi_1\beta_1^\star,
\end{align*}
since \(\pi_0+\pi_1=1\).

Now define \(\Delta\beta := \beta_0^\star-\beta_1^\star\).
Then
\begin{align*}
\beta_{\mathrm{dense}}^\star-\beta_0^\star
&=
(\pi_0\beta_0^\star+\pi_1\beta_1^\star)-\beta_0^\star
=
-\pi_1(\beta_0^\star-\beta_1^\star)
=
-\pi_1\Delta\beta,
\\
\beta_{\mathrm{dense}}^\star-\beta_1^\star
&=
(\pi_0\beta_0^\star+\pi_1\beta_1^\star)-\beta_1^\star
=
\pi_0(\beta_0^\star-\beta_1^\star)
=
\pi_0\Delta\beta.
\end{align*}
Plug these into \eqref{eq:mode_conflict_gap}:
\begin{align*}
\Delta_{\mathrm{conflict}}
&=
\frac{1}{2}
\Big[
\pi_0(-\pi_1\Delta\beta)^\top H (-\pi_1\Delta\beta)
+
\pi_1(\pi_0\Delta\beta)^\top H (\pi_0\Delta\beta)
\Big] \\
&=
\frac{1}{2}
\Big[
\pi_0\pi_1^2 \Delta\beta^\top H \Delta\beta
+
\pi_1\pi_0^2 \Delta\beta^\top H \Delta\beta
\Big] \\
&=
\frac{1}{2}
\pi_0\pi_1(\pi_0+\pi_1)\Delta\beta^\top H \Delta\beta \\
&=
\frac{1}{2}
\pi_0\pi_1\Delta\beta^\top H \Delta\beta.
\end{align*}
\end{proof}

Equation~\eqref{eq:equal_curvature_gap} is especially useful for intuition:
the dense compromise penalty grows with
(i) the frequency of both modes, through \(\pi_0\pi_1\),
(ii) the geometric separation between their preferred optima, and
(iii) the curvature in the directions where they disagree.

\paragraph{Why sequence-level routing matters.}
For sequence-level routing, each example has a single route \(r_i\), so
\begin{align*}
\ell_i
=
\sum_{t=1}^{|y_i|}
\ell_{i,t}(\alpha,\beta_{r_i}).
\end{align*}
Thus
\begin{align}
\nabla_{\beta_k}\ell_i
=
\mathbf{1}\{k=r_i\}
\sum_{t=1}^{|y_i|}
\nabla_{\beta_k}\ell_{i,t}.
\label{eq:sequence_level_grad}
\end{align}
A single response updates at most one expert.

If routing were token-level, with \(r_{i,t}\in\{0,1\}\), then
\begin{align*}
\ell_i^{\mathrm{token}}
=
\sum_{t=1}^{|y_i|}
\ell_{i,t}(\alpha,\beta_{r_{i,t}}),
\end{align*}
and hence
\begin{align*}
\nabla_{\beta_k}\ell_i^{\mathrm{token}}
=
\sum_{t=1}^{|y_i|}
\mathbf{1}\{k=r_{i,t}\}
\nabla_{\beta_k}\ell_{i,t}.
\end{align*}
Now the same response can update both experts, weakening the clean decoupling in
\eqref{eq:pointwise_grad} and \eqref{eq:sequence_level_grad}.
This is why sequence-level routing is the natural match to a turn-level \think{}/\nothink{} interface.

\paragraph{Why identical initialization matters.}
\ple{} initializes both experts from the same dense source MLP:
\begin{align*}
\beta_0^{(0)} = \beta_1^{(0)} = \beta_{\mathrm{src}}.
\end{align*}
If the two routes differ only through the duplicated expert weights at initialization, then before fine-tuning
\begin{align*}
p_\theta(y \mid x,r=0)
=
p_\theta(y \mid x,r=1)
=
p_{\mathrm{src}}(y \mid x).
\end{align*}
When newly introduced control-token embeddings or other route-specific components are present, this equality need not hold exactly.
In that case, the intended claim is narrower: the two experts start from identical weights, so specialization is not driven by random asymmetry in the expert parameters.

The update identities below are exact for plain SGD with matched learning rates and no optimizer state.
With momentum, Adam, gradient clipping, or asynchronous minibatch schedules, the same conclusion is qualitative rather than algebraically exact.

After one SGD step on a no-think batch and one SGD step on a think batch,
\begin{align*}
\beta_0^{(1)} &= \beta_{\mathrm{src}} - \eta g_0^{(0)}, \\
\beta_1^{(1)} &= \beta_{\mathrm{src}} - \eta g_1^{(0)}.
\end{align*}
Subtracting yields
\begin{align*}
\beta_1^{(1)} - \beta_0^{(1)}
=
-\eta\big(g_1^{(0)} - g_0^{(0)}\big).
\end{align*}
After \(T\) paired updates,
\begin{align*}
\beta_1^{(T)} - \beta_0^{(T)}
=
-\eta
\sum_{s=0}^{T-1}
\big(g_1^{(s)} - g_0^{(s)}\big).
\end{align*}
Thus expert divergence is driven purely by the cumulative difference between mode-specific gradients.

\paragraph{Why duplicating only the MLP can be sufficient.}
Let \(u_t\) denote the shared residual-stream representation entering a routed MLP block,
and let \(f_{\beta_r}(u_t)\) denote the routed expert output.
Abstract the remainder of the network (all later layers plus the LM head) as a differentiable map \(G\).
Then the route-\(r\) logits can be written locally as
\begin{align*}
o_t^{(r)}
=
G\big(u_t + f_{\beta_r}(u_t)\big).
\end{align*}
A first-order expansion around \(u_t\) gives
\begin{align}
o_t^{(r)}
\approx
G(u_t)
+
J_G(u_t)\, f_{\beta_r}(u_t),
\label{eq:mlp_linearization}
\end{align}
where \(J_G(u_t)\) is the Jacobian of \(G\) at \(u_t\).
This linearization should be interpreted as a local sensitivity argument rather than a global proof of sufficiency. It is accurate when the downstream map \(G\) is locally smooth around \(u_t\) and the routed residual update \(f_{\beta_r}(u_t)\) remains within a neighborhood where higher-order terms are small; in affine or shared linear-region regimes, the relation is exact.
Hence the think/no-think logit difference is approximately
\begin{align}
o_t^{(1)} - o_t^{(0)}
\approx
J_G(u_t)\Big(f_{\beta_1}(u_t)-f_{\beta_0}(u_t)\Big).
\label{eq:route_logit_difference}
\end{align}
Equations~\eqref{eq:mlp_linearization}--\eqref{eq:route_logit_difference}
make the architectural intuition explicit:
the shared backbone can keep constructing a common contextual representation \(u_t\),
while the mode-specific MLPs directly control how that representation is pushed toward different token distributions.
In this sense, \ple{} separates \emph{expression policy} more than \emph{core competence}.

\paragraph{A length-induced asymmetry heuristic.}
Let mode \(r\) have \(N_r\) examples with average target length \(m_r\),
and let \(g_{i,t}^{(r)}\) denote the per-token gradient contribution in the expert subspace.
A token-summed dense objective produces an epoch-level update
\begin{align*}
\Delta \beta_{\mathrm{dense}}
\propto
-
\sum_{r\in\{0,1\}}
\sum_{i=1}^{N_r}
\sum_{t=1}^{m_i^{(r)}}
g_{i,t}^{(r)}.
\end{align*}
If think targets are much longer than no-think targets, then even with similar example counts,
the dense expert receives more cumulative update mass from think-mode tokens.
\ple{} changes this to
\begin{align*}
\Delta \beta_0
\propto
-
\sum_{i=1}^{N_0}
\sum_{t=1}^{m_i^{(0)}}
g_{i,t}^{(0)},
\qquad
\Delta \beta_1
\propto
-
\sum_{i=1}^{N_1}
\sum_{t=1}^{m_i^{(1)}}
g_{i,t}^{(1)},
\end{align*}
so long think traces can no longer wash over the no-think expert.
This gives a conditional explanation for why \ple{} may especially strengthen the cleanliness and stability of \nothink{} mode when optimization effectively scales with token count and think targets are substantially longer.

\end{document}